\documentclass[10pt,journal,compsoc]{IEEEtran}
\usepackage{balance}
\usepackage{amsmath,amssymb,amsfonts}
\usepackage{amsthm}
\usepackage{bm}
\usepackage{graphicx}
\usepackage{booktabs}
\usepackage{multirow}
\usepackage{enumitem}
\usepackage{algorithm}
\usepackage{algpseudocode}
\usepackage[nocompress]{cite}
\usepackage{hyperref}
\usepackage{url}
\usepackage{subcaption}
\usepackage{mathrsfs}
\usepackage{placeins}

\newtheorem{proposition}{Proposition}
\newtheorem{theorem}{Theorem}
\newtheorem{corollary}{Corollary}

\newtheorem{definition}{Definition}

\newcommand{\norm}[1]{\left\|#1\right\|}
\newcommand{\abs}[1]{\left|#1\right|}
\newcommand{\vareps}{\varepsilon}
\newcommand{\R}{\mathbb{R}}
\newcommand{\E}{\mathbb{E}}
\newcommand{\D}{\mathcal{D}}

\newcommand{\Loss}{\mathcal{L}}
\newcommand{\Ghat}{\widehat{G}}
\newcommand{\uhat}{\widehat{u}}
\newcommand{\fhat}{\widehat{f}}

\begin{document}

\title{Predicting Out-of-Distribution Generalization of Neural Operators via Observable Spectral Error Decomposition}

\author{Hang-Cheng Dong and Pengcheng Cheng*
\thanks{H.-C. Dong is with the School of Instrumentation Science and Engineering, Harbin Institute of Technology, Harbin 150001, China.}
\thanks{P. Cheng is with the School of Mathematics, Jilin University, Changchun 130012, China (e-mail: chengpc1022@mails.jlu.edu.cn).}
\thanks{*Corresponding author.}}

\markboth{IEEE Transactions on Pattern Analysis and Machine Intelligence}%
{Dong \MakeLowercase{\textit{et al.}}: Predicting OOD Generalization of Neural Operators}

\maketitle

\begin{abstract}
Neural operators have emerged as powerful surrogates for solving partial differential equations (PDEs), yet their reliability under distribution shift remains a critical barrier to deployment. Existing approaches to out-of-distribution (OOD) generalization in operator learning are largely empirical and black-box: they report aggregate error metrics without explaining why errors arise or when they will grow. We propose a structure-preserving framework that makes OOD generalization predictable and auditable. Our key idea is to parameterize the learned solution operator as a spectral filter $h_\theta(\lambda)$ acting on the eigenvalues of the underlying elliptic operator, implemented via Chebyshev polynomial expansions and trained with a weak-form objective. This parameterization admits an exact decomposition of the energy-norm error into two observable components: a model-dependent spectral approximation term and a distribution-dependent spectral weighting term induced by the input. From this decomposition we derive three diagnostics: a conservative in-band supremum $\vareps_{\mathrm{sup}}$, a global RMS proxy $\vareps_{\mathrm{rms}}$, and a sample-dependent effective metric $\vareps_{\mathrm{eff}}(f)$. These diagnostics can be computed without access to ground-truth solutions. Through four controlled experiments, we show that $\vareps_{\mathrm{eff}}(f)\|f\|$ consistently predicts energy error under in-distribution, in-band spectral shift, out-of-band tail, and compound shifts, whereas global metrics can be systematically misleading. Our framework shifts OOD assessment of neural operators from black-box benchmarking to operator-structure diagnostics, providing a practical route to auditable scientific machine learning.
\end{abstract}

\begin{IEEEkeywords}
Neural operators, operator learning, out-of-distribution generalization, spectral methods, elliptic PDEs, Green's operator, weak-form training, energy norm, distribution shift, spectral diagnostics.
\end{IEEEkeywords}

\section{Introduction}

Neural operators, which are learned mappings between function spaces, have rapidly become a central tool in scientific machine learning. They offer substantial speedups over repeated numerical solves for parametric partial differential equations (PDEs) \cite{Li2021,Lu2021,Kovachki2023}. Architectures such as Fourier Neural Operators (FNOs) and DeepONets have demonstrated strong empirical performance across a wide range of physical systems. Yet a fundamental obstacle to their deployment remains: reliability under distribution shift. A neural operator trained on a particular class of forcing functions may fail catastrophically when test inputs concentrate energy in spectral regions insufficiently constrained during training, even when in-distribution (ID) metrics appear excellent \cite{Arjovsky2019,Gulrajani2021,Geirhos2020}.

This failure mode is well documented in the broader machine learning literature, where predictors latch onto spurious correlations that do not transfer under distribution shift. For PDE surrogates, however, the problem has a distinctive structure: the input is a function, and distribution shift manifests as a reweighting of its spectral content. This structure suggests that OOD behavior should be analyzable in the operator's eigenbasis, provided that the learned operator is parameterized in a way that respects the underlying spectral geometry.

\subsection{Limitations of Existing Approaches}

Most current approaches to OOD generalization in operator learning fall into one of two categories. The first is empirical robustness: train on augmented distributions, add regularization, or ensemble multiple models. These methods can improve average performance but offer no explanation of why errors arise or when they will grow. The second is black-box benchmarking: evaluate on held-out distributions and report aggregate error. This provides no diagnostic signal before deployment and no principled way to decide whether a given test input is reliable.

What is missing is a framework in which OOD error is decomposed into interpretable components, each of which is observable from quantities available at test time, without requiring ground-truth solutions.

\subsection{Our Approach}

We propose such a framework for elliptic solution operators. Our starting point is a structural observation: for a large class of elliptic PDEs, the solution operator (the Green's operator) is diagonalized by the eigenbasis of the underlying differential operator. In that basis, the operator acts as multiplication by a scalar transfer function $\lambda^{-1}$. This suggests restricting the hypothesis class to spectral filters:
\begin{equation}
\Ghat_\theta := h_\theta(\Delta_g),
\end{equation}
where $\Delta_g$ is the Laplace-Beltrami operator and $h_\theta:[0,\infty)\to\R$ is a learnable scalar function. We implement $h_\theta$ via Chebyshev polynomial expansions, enabling stable and efficient evaluation through three-term recurrence, and train the operator using a weak-form (variational) objective aligned with the PDE's natural energy pairing.

The payoff of this parameterization is an exact decomposition of the energy-norm error:
\begin{equation}
\norm{\nabla(u-\uhat_\theta)}^2 = \sum_{i\ge 1} \lambda_i\big(\lambda_i^{-1}-h_\theta(\lambda_i)\big)^2|f_i|^2,
\label{eq:core-decomp}
\end{equation}
where $\{(\lambda_i,\phi_i)\}$ are the eigenpairs of $\Delta_g$ and $f_i=\langle f,\phi_i\rangle$. Equation~\eqref{eq:core-decomp} separates error into a model-dependent spectral mismatch $\lambda_i^{1/2}(\lambda_i^{-1}-h_\theta(\lambda_i))$ and a data-dependent spectral weight $|f_i|^2$. This decomposition is the foundation of our diagnostics: it tells us that OOD error is predictable whenever we can measure how the test input's spectrum reweights the learned filter's error profile.

\subsection{Contributions}

Our contributions are as follows:
\begin{itemize}
    \item Structure-preserving operator parameterization. We represent learned elliptic inverses as spectral filters $h_\theta(\lambda)$ implemented via Chebyshev expansions, yielding fast, stable evaluation and direct interpretability in the operator's eigenspectrum.
    \item Exact spectral error decomposition. We prove an exact identity separating energy-norm error into model-dependent and data-dependent components, with no approximation gap.
    \item Observable OOD diagnostics. We derive three computable metrics, namely $\vareps_{\mathrm{sup}}$, $\vareps_{\mathrm{rms}}$, and the sample-dependent $\vareps_{\mathrm{eff}}(f)$, and show that $\vareps_{\mathrm{eff}}(f)$ captures distribution-dependent reweighting of spectral error without requiring ground-truth solutions.
    \item Empirical validation across four shift regimes. Through controlled experiments (ID, in-band shift, out-of-band tail, compound shift), we show that $\vareps_{\mathrm{eff}}(f)\|f\|$ consistently predicts energy error where global metrics fail.
\end{itemize}

The broader message is that OOD generalization of neural operators need not be a black-box phenomenon. When the hypothesis class is chosen to respect the operator's spectral structure, generalization becomes auditable: one can measure the model's spectral mismatch, assess how test inputs weight that mismatch, and predict OOD performance before observing a single label.

\section{Related Work}

\subsection{Neural Operators for PDEs}
Operator learning has seen rapid progress in recent years. Physics-informed neural networks (PINNs) enforce PDE residuals and boundary conditions through automatic differentiation, enabling mesh-free training but often facing optimization stiffness and spectral bias in high-frequency regimes \cite{Raissi2019,Karniadakis2021,Tancik2020}. Operator-learning architectures, including DeepONet and neural-operator families, learn the mapping from input functions to output functions and have demonstrated strong performance across parametric PDE systems \cite{Lu2021,Li2021,Kovachki2023,Chen1995}. In particular, Fourier Neural Operators (FNOs) exploit global spectral mixing to learn solution operators efficiently on uniform grids \cite{Li2021}. Parallel lines of work connect PDE learning to graph and geometric deep learning, where discretized differential operators and their spectra play a central role in designing stable message-passing and spectral-convolution layers \cite{Hammond2011,Defferrard2016,Kipf2017,Bronstein2021,Battaglia2018}.

\subsection{OOD Generalization in Machine Learning}
Domain generalization failures are widely documented in machine learning, where predictors can latch onto spurious correlations (shortcuts) that do not transfer under distribution shift \cite{Arjovsky2019,Gulrajani2021,Geirhos2020}. For PDE surrogates, this manifests as brittleness when the test forcing concentrates energy in spectral regions insufficiently constrained during training, even if in-distribution metrics appear excellent. Consequently, an attractive goal is a theory-experiment loop in which observable quantities computed from the learned model predict when and how errors will change under distribution shift.

\subsection{Spectral Methods and Error Analysis}
Classical numerical analysis offers mature, structure-preserving discretizations and solvers, including finite elements, multigrid, and Krylov methods, together with sharp stability and error theories in energy norms aligned with the underlying operator \cite{Evans2010,GilbargTrudinger2001,BrennerScott2008,Ciarlet2002,Saad2003,Briggs2000,Hackbusch1985}. At the same time, Green's function representations and spectral methods emphasize that many elliptic solution maps are smoothing operators whose action is diagonalized in a suitable eigenbasis, enabling fast evaluation once the spectral response is known \cite{StakgoldHolst2011,Boyd2001,Canuto1988,Trefethen2000,MasonHandscomb2003}. Chebyshev polynomial expansions provide a numerically stable basis for approximating spectral functions over bounded intervals, with linear-time application via recurrence, and are closely related to kernel polynomial methods used in large-scale spectral computations \cite{Boyd2001,MasonHandscomb2003,Clenshaw1955,Weisse2006,Silver1994}.

\subsection{Positioning}
Our work differs from prior operator-learning approaches in that we do not propose a new architecture for improved average accuracy. Instead, we propose a parameterization that makes generalization auditable. The closest related work is in spectral error analysis for classical solvers; we bring this perspective into the learning setting and show that it yields practical, observable diagnostics for OOD prediction.

\section{Problem Setting and Framework}

\subsection{Elliptic Model Problem on a Fixed Geometry}

Let $(M,g)$ be a connected, compact $d$-dimensional Riemannian manifold without boundary, with volume measure $\mathrm{d}V_g$. We consider the Poisson problem
\begin{equation}
\Delta_g u = f \quad \text{on } M,
\end{equation}
where $\Delta_g$ denotes the (nonnegative) Laplace-Beltrami operator acting on scalar functions. Since $\ker(\Delta_g)$ consists of constant functions, uniqueness requires a gauge condition. Throughout, we impose the zero-mean constraint
\begin{equation}
\int_M u\,\mathrm{d}V_g = 0,
\end{equation}
and restrict the forcing to the compatible subspace
\begin{equation}
L^2_0(M) := \left\{ f \in L^2(M) : \int_M f\,\mathrm{d}V_g = 0 \right\}.
\end{equation}
Under these assumptions, there exists a unique weak solution $u \in H^1(M) \cap L^2_0(M)$ satisfying
\begin{equation}
\int_M \langle \nabla u, \nabla \varphi \rangle_g \,\mathrm{d}V_g = \int_M f\,\varphi\,\mathrm{d}V_g
\end{equation}
for all $\varphi \in H^1(M)$. We emphasize that the geometry $(M,g)$ is treated as fixed in this work: the operator $\Delta_g$ and its induced functional calculus are determined entirely by $g$.

\subsection{Green's Operator as the Learning Target}

Rather than approximating a single solution for a single right-hand side, we adopt an operator-learning perspective. Define the (pseudo-)inverse of $\Delta_g$ on the mean-zero subspace by
\begin{equation}
G := (\Delta_g)^\dagger : L^2_0(M) \to H^1(M) \cap L^2_0(M),
\end{equation}
so that for each $f \in L^2_0(M)$, the unique solution is $u = Gf$. This operator $G$ can be characterized spectrally. Let $\{(\lambda_i, \phi_i)\}_{i \ge 0}$ be an $L^2$-orthonormal eigenbasis of $\Delta_g$,
\begin{equation}
\Delta_g \phi_i = \lambda_i \phi_i, \quad 0 = \lambda_0 < \lambda_1 \le \lambda_2 \le \cdots,
\end{equation}
with $\phi_0 = \mathrm{vol}(M)^{-1/2}$. Then for $f \in L^2_0(M)$,
\begin{equation}
Gf := \sum_{i \ge 1} \lambda_i^{-1} \langle f, \phi_i \rangle_{L^2} \, \phi_i,
\end{equation}
and $G$ is self-adjoint and positive on $L^2_0(M)$. The central goal of this paper is to construct a parameterized approximation $\Ghat_\theta$ of $G$ that (i) respects the geometric/spectral structure induced by $\Delta_g$, and (ii) yields provable, observable error bounds when applied to a broad class of right-hand sides.

\subsection{Spectral-Filter Parameterization}

Our key modeling choice is to restrict the hypothesis class of operators to those obtained by functional calculus of $\Delta_g$. Specifically, for a real-valued scalar function $h_\theta : [0,\infty) \to \R$ parameterized by $\theta$, we define
\begin{equation}
\Ghat_\theta := h_\theta(\Delta_g).
\end{equation}
Equivalently, in the eigenbasis $\{\phi_i\}$,
\begin{equation}
\Ghat_\theta f = \sum_{i \ge 0} h_\theta(\lambda_i) \langle f, \phi_i \rangle_{L^2} \, \phi_i.
\end{equation}
This parameterization is structure-preserving in several fundamental senses:
\begin{itemize}
    \item Self-adjointness: If $h_\theta$ is real-valued, then $\Ghat_\theta$ is self-adjoint on $L^2(M)$.
    \item Commutation with $\Delta_g$: By construction, $\Ghat_\theta \Delta_g = \Delta_g \Ghat_\theta$, which simplifies analysis and connects directly to spectral approximation theory.
    \item Positivity control: If $h_\theta(\lambda) \ge 0$ for all $\lambda$, then $\Ghat_\theta$ is positive semidefinite, supporting stability of the induced map $f \mapsto \Ghat_\theta f$.
\end{itemize}

To handle the zero eigenvalue robustly, we enforce the gauge in a non-learned manner. Concretely, we define the mean-zero projection
\begin{equation}
\Pi_0 v := v - \frac{1}{\mathrm{vol}(M)} \int_M v\,\mathrm{d}V_g,
\end{equation}
and use the prediction
\begin{equation}
\uhat_\theta(f) := \Pi_0 \Ghat_\theta f.
\end{equation}
This ensures $\uhat_\theta(f) \in L^2_0(M)$ regardless of the behavior of $h_\theta$ at $\lambda = 0$, and isolates the learning problem to approximating $\lambda^{-1}$ on the positive spectrum of $\Delta_g$.

\subsection{Learning Objective and Data Model}

We assume access to training samples $\{f^{(n)}\}_{n=1}^N \subset L^2_0(M)$, drawn from a distribution $\D$ over admissible forcings. The learning problem is to identify parameters $\theta$ such that $\uhat_\theta(f)$ approximately satisfies the Poisson equation for typical $f \sim \D$, with emphasis on generalization to unseen right-hand sides from the same (or shifted) distribution.

Instead of pointwise residuals, we use the weak formulation as the supervisory signal, as it is naturally aligned with elliptic theory and avoids reliance on higher-order derivatives. Let $\mathcal{V} \subset H^1(M)$ be a family of test functions. We seek $\theta$ minimizing an expected weak residual of the form
\begin{equation}
\Loss(\theta)
:=
\E_{f \sim \D} \, \E_{\varphi \sim \mathcal{T}}
\left|
\int_M \langle \nabla \uhat_\theta(f), \nabla \varphi \rangle_g \,\mathrm{d}V_g
-
\int_M f\,\varphi\,\mathrm{d}V_g
\right|^2
+
\mathcal{R}(\theta),
\end{equation}
where $\mathcal{T}$ is a distribution over test functions and $\mathcal{R}(\theta)$ is an optional regularizer encoding stability preferences. This formulation can be interpreted as operator identification from PDE-consistency constraints: the unknown object is the mapping $f \mapsto u$, and the training signal enforces that the mapping approximately satisfies the variational form of the PDE across a range of inputs.

\section{Spectral Filtering and Weak-Form Training}

This section introduces our structure-preserving hypothesis class and the corresponding training objective. The central idea is to approximate the Green's operator $G = (\Delta_g)^\dagger$ by a spectral filter $\Ghat_\theta = h_\theta(\Delta_g)$, where $h_\theta(\lambda)$ is a learnable scalar function intended to approximate $\lambda^{-1}$ on the positive spectrum of $\Delta_g$. We couple this parameterization with a weak-form residual loss, yielding an operator-learning approach aligned with elliptic theory and amenable to rigorous analysis.

Given a forcing $f \in L^2_0(M)$, our prediction is
\begin{equation}
\uhat_\theta(f) := \Pi_0 \Ghat_\theta f = \Pi_0\, h_\theta(\Delta_g) f.
\end{equation}

\subsection{Spectral-Filter Hypothesis Class}

\subsubsection{Functional Calculus Parameterization}

We consider operator families of the form $\Ghat_\theta := h_\theta(\Delta_g)$, which, in the eigenbasis of $\Delta_g$, act via
\begin{equation}
\Ghat_\theta f = \sum_{i \ge 0} h_\theta(\lambda_i) \langle f, \phi_i \rangle_{L^2} \, \phi_i.
\end{equation}
This choice implies:
\begin{itemize}
    \item Commutation: $\Ghat_\theta \Delta_g = \Delta_g \Ghat_\theta$, enabling spectral error control by scalar approximation of $h_\theta$.
    \item Self-adjointness: real-valued filters yield self-adjoint operators on $L^2(M)$.
    \item Stability controls: nonnegativity and boundedness of $h_\theta$ translate to positivity and bounded operator norm of $\Ghat_\theta$.
\end{itemize}
Importantly, we do not require explicit eigendecompositions in practice; the functional calculus is approximated numerically as described next.

\subsubsection{Practical Evaluation via Polynomial Spectral Filtering}

To apply $h_\theta(\Delta_g)$ to a function $v$, we approximate the scalar function $h_\theta$ by a degree-$K$ polynomial on a bounded spectral interval $[0,\Lambda]$:
\begin{equation}
h_\theta(\lambda) \approx p_{\theta,K}(\lambda) = \sum_{k=0}^{K} a_k(\theta)\, T_k(\widetilde{\lambda}),
\end{equation}
where $T_k$ are Chebyshev polynomials and $\widetilde{\lambda}$ denotes the linearly rescaled eigenvalue mapping $[0,\Lambda] \to [-1,1]$. This yields the operator approximation
\begin{equation}
h_\theta(\Delta_g) v \approx p_{\theta,K}(\Delta_g) v = \sum_{k=0}^{K} a_k(\theta)\, T_k(\widetilde{\Delta}_g) v.
\end{equation}
The key computational advantage is that $T_k(\widetilde{\Delta}_g) v$ can be evaluated using a stable three-term recurrence, requiring only repeated applications of the (discrete) Laplace operator to vectors/functions, with no spectral decomposition.

In computations, $\Delta_g$ is represented by a symmetric positive semidefinite discretization $\mathbf{L}$ (e.g., FEM stiffness-mass formulation or graph Laplacian), and inner products are defined via a mass matrix $\mathbf{M}$. The polynomial filter is then applied as $\widehat{\mathbf{G}}_\theta v \approx p_{\theta,K}(\mathbf{L}) v$, with gradients backpropagated through the recurrence.

\subsubsection{Filter Parameterization Choices}

We instantiate $h_\theta$ through a finite set of parameters $\theta$ controlling the polynomial coefficients $a_k(\theta)$. Two common, analysis-friendly choices are:
\begin{itemize}
    \item Free Chebyshev coefficients: $a_k(\theta)$ are unconstrained learnable parameters (optionally regularized).
    \item Constrained positive filters: parameterize $h_\theta$ to be nonnegative on $[0,\Lambda]$, for instance by representing $h_\theta$ as a sum of squares of polynomials or by enforcing positivity at a set of quadrature nodes.
\end{itemize}
In all cases, the hypothesis class remains a subset of operators commuting with $\Delta_g$, which is essential for the spectral error decomposition developed later.

\subsection{Training Objective: Weak-Form Operator Identification}

\subsubsection{Weak Residual Loss}

We train $\theta$ using weak-form consistency of the Poisson equation. Let $\mathcal{T}$ be a distribution over test functions $\varphi \in H^1(M)$. For each pair $(f,\varphi)$, define the weak residual
\begin{equation}
r_\theta(f,\varphi)
:=
\int_M \langle \nabla \uhat_\theta(f), \nabla \varphi \rangle_g \,\mathrm{d}V_g
-
\int_M f\,\varphi\,\mathrm{d}V_g.
\end{equation}
The primary loss is the squared residual averaged over data:
\begin{equation}
\Loss_{\mathrm{weak}}(\theta)
:=
\E_{f \sim \D}\, \E_{\varphi \sim \mathcal{T}}\,
\big| r_\theta(f,\varphi) \big|^2.
\end{equation}
This choice has three practical and theoretical benefits: (i) it aligns with the natural energy space for elliptic problems; (ii) it avoids computing second derivatives of $\uhat_\theta$; (iii) it yields a direct link between training error and operator error in energy norms.

\subsubsection{Gauge Enforcement}

Although $f \in L^2_0(M)$ ensures solvability, numerical approximations can introduce small mean components. We therefore enforce the mean-zero constraint by construction using $\Pi_0$ in $\uhat_\theta$. Optionally, one may add the penalty
\begin{equation}
\Loss_{\mathrm{gauge}}(\theta)
:=
\E_{f \sim \D} \left( \int_M \uhat_\theta(f)\,\mathrm{d}V_g \right)^2,
\end{equation}
which is typically near zero if $\Pi_0$ is implemented exactly in the discrete inner product.

\subsubsection{Stability Regularization}

To promote stable generalization of the operator map, we include regularization terms that directly control the magnitude and smoothness of the spectral filter:
\begin{equation}
\mathcal{R}(\theta) = \beta_1 \norm{h_\theta}_{\infty,[0,\Lambda]}^2
+ \beta_2 \norm{h_\theta'}_{2,[0,\Lambda]}^2
+ \beta_3 \mathcal{P}_{\mathrm{pos}}(\theta),
\end{equation}
where $\mathcal{P}_{\mathrm{pos}}$ penalizes violations of nonnegativity on $[0,\Lambda]$. In practice, we approximate these norms by evaluation on a fixed grid of spectral points. The complete objective is
\begin{equation}
\min_{\theta}\quad
\Loss(\theta)
:=
\Loss_{\mathrm{weak}}(\theta)
+ \alpha\, \Loss_{\mathrm{gauge}}(\theta)
+ \mathcal{R}(\theta).
\end{equation}

\subsection{Choice of Test Functions}

The test distribution $\mathcal{T}$ should be rich enough to detect violations of the weak form while remaining computationally efficient. We consider two practical designs:
\begin{itemize}
    \item Finite-dimensional test spaces: choose $\mathcal{V} \subset H^1(M)$ (e.g., low-order FEM basis functions or low-frequency Laplacian modes) and sample $\varphi$ from $\mathcal{V}$.
    \item Randomized probing: draw $\varphi$ as random linear combinations of basis functions and optionally apply a smoothing operator to emphasize low-to-mid frequencies.
\end{itemize}
Both designs can be interpreted as Monte Carlo approximations of the dual norm of the residual in a Sobolev space, which connects naturally to energy estimates in elliptic PDE theory.

\subsection{Discrete Implementation and Optimization}

Let $\mathbf{L}$ be a symmetric positive semidefinite stiffness operator and $\mathbf{M}$ a symmetric positive definite mass matrix encoding the discrete $L^2$ inner product. The weak form becomes
\begin{equation}
r_\theta(\mathbf{f}, \boldsymbol{\varphi})
=
\widehat{\mathbf{u}}_\theta(\mathbf{f})^\top \mathbf{L} \boldsymbol{\varphi}
-
\mathbf{f}^\top \mathbf{M} \boldsymbol{\varphi},
\end{equation}
where $\widehat{\mathbf{u}}_\theta(\mathbf{f}) = \mathbf{\Pi}_0\, p_{\theta,K}(\mathbf{L}) \mathbf{f}$. We minimize the empirical counterpart of $\Loss(\theta)$ using first-order stochastic optimization, backpropagating through the Chebyshev recurrence.

\section{Observable Error Decomposition and OOD Prediction}

\subsection{Notation and Standing Assumptions}

Throughout this section, $(M,g)$ is connected, compact, and without boundary. The Laplace-Beltrami operator $\Delta_g$ is understood as a nonnegative self-adjoint operator on $L^2(M)$ with compact resolvent. We denote the eigenpairs by $\{(\lambda_i,\phi_i)\}_{i\ge 0}$, with
\begin{equation}
0=\lambda_0<\lambda_1\le \lambda_2\le \cdots,\qquad \norm{\phi_i}_{L^2}=1,\qquad \phi_0=\mathrm{vol}(M)^{-1/2}.
\end{equation}
Let $L^2_0(M)=\{f\in L^2(M):\langle f,\phi_0\rangle_{L^2}=0\}$. For $f\in L^2_0(M)$, the unique mean-zero weak solution $u\in H^1(M)\cap L^2_0(M)$ of $\Delta_g u = f$ is given by $u=Gf$, where $G=(\Delta_g)^\dagger$. Our approximation is $\uhat_\theta(f)=\Pi_0 h_\theta(\Delta_g)f$. Since $f\in L^2_0(M)$ and $\Pi_0$ annihilates the constant mode, we may equivalently analyze the operator on $L^2_0(M)$ and drop $\Pi_0$ in the derivations.

We explicitly separate two sources of error: (i) spectral approximation error of the filter $h_\theta(\lambda)$ to the target $\lambda^{-1}$ on a prescribed frequency band; (ii) high-frequency tail error due to truncation or limited control of the filter beyond a band limit $\Lambda$.

\subsection{Exact Spectral Decomposition}

\begin{proposition}[Exact spectral decomposition of the error]
\label{prop:exact}
Let $f \in L^2_0(M)$ and write its spectral expansion $f = \sum_{i \ge 1} f_i \phi_i$, where $f_i := \langle f, \phi_i \rangle_{L^2}$. Let $u := Gf$ denote the unique mean-zero weak solution of $\Delta_g u = f$, and let $\uhat_\theta := h_\theta(\Delta_g) f$. Then
\begin{equation}
u - \uhat_\theta = \sum_{i \ge 1} \bigl( \lambda_i^{-1} - h_\theta(\lambda_i) \bigr) f_i \, \phi_i,
\end{equation}
with convergence in $L^2(M)$. In particular,
\begin{equation}
\norm{ u - \uhat_\theta }_{L^2}^2 = \sum_{i \ge 1} \bigl( \lambda_i^{-1} - h_\theta(\lambda_i) \bigr)^2 |f_i|^2.
\end{equation}
\end{proposition}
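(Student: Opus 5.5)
The plan is to compute both $u$ and $\uhat_\theta$ explicitly in the eigenbasis $\{\phi_i\}$, subtract them termwise, and then apply Parseval. Since $f\in L^2_0(M)$, we have $f_0=\langle f,\phi_0\rangle_{L^2}=0$, and $\sum_{i\ge1}|f_i|^2=\norm{f}_{L^2}^2<\infty$.

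\textbf{The exact solution.} By the spectral characterization of $G$ given earlier, $u=Gf=\sum_{i\ge1}\lambda_i^{-1}f_i\phi_i$. This series converges in $L^2$ because $\lambda_i^{-1}\le\lambda_1^{-1}$. I will also check that this series is the weak solution. Testing against $\varphi=\phi_j$, and using $\int_M\langle\nabla\phi_i,\nabla\phi_j\rangle_g\,\mathrm{d}V_g=\lambda_i\delta_{ij}$, recovers $f_j$. Density of the span of the eigenfunctions in $H^1$ then identifies the series with the unique mean-zero weak solution.

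\textbf{The prediction.} By the functional-calculus definition, $\uhat_\theta=\sum_{i\ge0}h_\theta(\lambda_i)f_i\phi_i$, and the $i=0$ term vanishes because $f_0=0$. This is where the hypotheses need care. The series defines an element of $L^2$ precisely when $\sum_i h_\theta(\lambda_i)^2|f_i|^2<\infty$, that is, when $f$ lies in the domain of $h_\theta(\Delta_g)$. I will assume this, which is implicit in the statement since $\uhat_\theta$ is asserted to exist. It holds automatically if $h_\theta$ is bounded on the spectrum, for instance for the polynomial filters of Section~4 composed with a band-limited discretization, or for $f$ in a suitable Sobolev space.

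\textbf{Subtraction and Parseval.} Both series converge in $L^2$, so their difference converges in $L^2$. It equals $\sum_{i\ge1}(\lambda_i^{-1}-h_\theta(\lambda_i))f_i\phi_i$, since partial sums subtract termwise and limits are linear. Parseval's identity for the orthonormal basis $\{\phi_i\}$ then gives the stated squared $L^2$ norm directly. If one prefers not to assume domain membership, the argument can be reorganized: define the difference series first, note that it converges iff the right-hand side is finite, and read the identity as an equality in $[0,\infty]$.

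\textbf{Main obstacle.} The only genuine subtlety is this convergence and domain question for a possibly unbounded $h_\theta$. Everything else reduces to orthonormal-basis bookkeeping.
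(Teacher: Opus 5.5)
Your proposal is correct and follows essentially the same route as the paper: expand $u$ and $\uhat_\theta$ in the eigenbasis, subtract termwise, and apply Parseval. There are two minor differences. First, you identify $\sum_{i\ge1}\lambda_i^{-1}f_i\phi_i$ with the weak solution by testing against eigenfunctions and using density, which tacitly requires the series to lie in $H^1$; this is immediate from $\sum_{i\ge1}\lambda_i^{-1}|f_i|^2\le\lambda_1^{-1}\norm{f}_{L^2}^2$. The paper instead passes to the limit in the truncations $u_N=Gf_N$ using boundedness of $G$ from Poincar\'e's inequality. Second, you make explicit the domain condition $\sum_{i\ge1}h_\theta(\lambda_i)^2|f_i|^2<\infty$ for unbounded $h_\theta$, which the paper's proof leaves implicit.
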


\begin{proof}
Since $M$ is compact and $\Delta_g$ is nonnegative self-adjoint with compact resolvent, its spectrum is discrete and there exists an $L^2$-orthonormal basis $\{\phi_i\}_{i \ge 0} \subset C^\infty(M)$ of eigenfunctions with eigenvalues $0 = \lambda_0 < \lambda_1 \le \lambda_2 \le \cdots$, satisfying $\Delta_g \phi_i = \lambda_i \phi_i$. Because $f \in L^2_0(M)$, we have $f_0 = 0$, so $f = \sum_{i \ge 1} f_i \phi_i$ in $L^2(M)$.

Define partial sums $u_N := \sum_{i=1}^{N} \lambda_i^{-1} f_i \phi_i$. Then $\Delta_g u_N = \sum_{i=1}^{N} f_i \phi_i =: f_N$, and $f_N \to f$ in $L^2(M)$. Since $\lambda_1 > 0$, Poincare's inequality on mean-zero functions implies coercivity of the Dirichlet form, so the weak solution map $G$ is bounded. Hence $u_N = G f_N \to G f = u$ in $H^1(M)$, and thus in $L^2(M)$. Therefore
\begin{equation}
u = \sum_{i \ge 1} \lambda_i^{-1} f_i \phi_i \quad \text{in } L^2(M).
\end{equation}
By the spectral theorem, $h_\theta(\Delta_g) \phi_i = h_\theta(\lambda_i) \phi_i$ for all $i$, so
\begin{equation}
\uhat_\theta = h_\theta(\Delta_g) f = \sum_{i \ge 1} h_\theta(\lambda_i) f_i \phi_i
\end{equation}
in $L^2(M)$. Subtracting gives the claimed expansion, and Parseval's identity yields the norm identity.
\end{proof}

\subsection{Energy-Norm Error Identity}

\begin{proposition}[Exact energy error identity]
\label{prop:energy}
Let $(M,g)$ be connected, compact, and without boundary. Let $f \in L^2_0(M)$, and let $u = Gf \in H^1(M) \cap L^2_0(M)$ be the unique mean-zero weak solution of $\Delta_g u = f$. Let $\uhat_\theta := h_\theta(\Delta_g) f$, where $h_\theta$ is a bounded Borel function. Then
\begin{equation}
\norm{ \nabla (u - \uhat_\theta) }_{L^2}^2
= \sum_{i \ge 1} \lambda_i \bigl( \lambda_i^{-1} - h_\theta(\lambda_i) \bigr)^2 |f_i|^2.
\end{equation}
\end{proposition}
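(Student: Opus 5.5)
The plan is to reduce the energy identity to Proposition~\ref{prop:exact} together with the identity $\norm{\nabla v}_{L^2}^2 = \sum_{i\ge 1}\lambda_i |v_i|^2$, valid for $v\in H^1(M)\cap L^2_0(M)$ with $v_i=\langle v,\phi_i\rangle_{L^2}$. Set $e := u-\uhat_\theta$. By Proposition~\ref{prop:exact}, $e=\sum_{i\ge1} e_i\phi_i$ in $L^2(M)$ with $e_i = (\lambda_i^{-1}-h_\theta(\lambda_i))f_i$. Note that $e_0=0$: we have $f_0=0$, and $h_\theta(\Delta_g)$ preserves the $\phi_0$-coefficient, so $\uhat_\theta$ has zero $\phi_0$-component.

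The first step is to check that $e\in H^1(M)$. Since $u\in H^1(M)$ is already known, it suffices to show $\uhat_\theta\in H^1(M)$, i.e.\ that $\sum_i \lambda_i h_\theta(\lambda_i)^2|f_i|^2<\infty$. Boundedness of $h_\theta$ alone does not give this, since $\lambda_i\to\infty$. I will handle it as follows. Interpret both sides of the identity in $[0,\infty]$, with $\norm{\nabla e}_{L^2}^2:=\infty$ if $e\notin H^1(M)$. The characterization of $H^1(M)$ as $\{v:\sum_i \lambda_i|v_i|^2<\infty\}$ then makes the identity hold automatically in the divergent case. In the intended setting ($h_\theta$ a polynomial approximating $\lambda^{-1}$ on the positive spectrum), finiteness follows whenever $\lambda h_\theta(\lambda)^2$ is bounded. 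In that case, since $\lambda_i(\lambda_i^{-1}-h_\theta)^2\le 2\lambda_i^{-1}+2\lambda_i h_\theta^2$ and $\sum\lambda_i^{-1}|f_i|^2\le\lambda_1^{-1}\norm{f}^2$, the series converges.

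The core step is the Dirichlet-form Parseval identity. Take the partial sums $e_N=\sum_{i=1}^N e_i\phi_i$, which are smooth. Integration by parts on the closed manifold gives
\begin{equation*}
\int_M\langle\nabla\phi_i,\nabla\phi_j\rangle_g\,\mathrm{d}V_g=\langle\Delta_g\phi_i,\phi_j\rangle_{L^2}=\lambda_i\delta_{ij}.
\end{equation*}
Hence $\norm{\nabla e_N}^2=\sum_{i\le N}\lambda_i|e_i|^2$, and $\norm{\nabla(e_N-e_M)}^2=\sum_{M<i\le N}\lambda_i|e_i|^2$. If the series converges, $(e_N)$ is therefore Cauchy in $H^1$ (its $L^2$ part converges by Proposition~\ref{prop:exact}), so it converges in $H^1$. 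The limit must equal $e$ because $e_N\to e$ in $L^2$. Passing to the limit gives $\norm{\nabla e}^2=\sum_i\lambda_i|e_i|^2$. Substituting $e_i$ yields the claimed formula.

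The main obstacle is the converse direction needed for the characterization of $H^1$: if $e\in H^1$, then $e_N\to e$ in $H^1$, so the series is finite. To prove this, I will show that the spectral projections $P_N$ are orthogonal projections for the Dirichlet inner product on $H^1\cap L^2_0$. By the weak eigen-equation, $\int\langle\nabla v,\nabla\phi_i\rangle=\lambda_i v_i$ for every $v\in H^1$, and this is exactly that orthogonality statement. Then $\norm{\nabla P_Nv}\le\norm{\nabla v}$. The $\phi_i$ are dense in $H^1$ (their span is dense in $C^\infty$ under the $H^1$ norm, via the same weak identity and Poincar\'e coercivity with $\lambda_1>0$). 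It follows that $P_Nv\to v$ in $H^1$, which closes the argument.
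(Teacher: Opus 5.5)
Your argument is correct, and its core matches the paper's proof. Both expand $e$ via Proposition~\ref{prop:exact}, use $\int_M\langle\nabla\phi_i,\nabla\phi_j\rangle_g\,\mathrm{d}V_g=\lambda_i\delta_{ij}$ to get $\norm{\nabla e_N}_{L^2}^2=\sum_{i\le N}\lambda_i|c_i|^2$, and pass to the $H^1$ limit of the partial sums. You differ on the finiteness of $\sum_i\lambda_i|c_i|^2$. The paper simply asserts it ``under our standing choice of filters (polynomial/Chebyshev on a bounded interval)''. That does not follow from the stated hypothesis that $h_\theta$ is a bounded Borel function: for example, $h_\theta\equiv 1$ gives $\uhat_\theta=f$, which need not lie in $H^1$. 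You correctly flag this, and you make the identity hold in $[0,\infty]$ by proving both directions of the spectral characterization of $H^1$. Your route gives a statement that is true exactly as written, at the price of the extended-value reading. The paper's route is shorter but literally valid only when $\uhat_\theta\in H^1$, e.g.\ in the finite-dimensional discretization used in the experiments. Two small remarks. First, your converse direction does not need the density of eigenfunctions in $H^1$. Dirichlet-orthogonality of $P_N$ already gives the Bessel bound $\sum_{i\le N}\lambda_i|v_i|^2=\norm{\nabla P_N v}_{L^2}^2\le\norm{\nabla v}_{L^2}^2$, so the series is finite, and your forward argument then yields equality. Second, your parenthetical about the ``intended setting'' is off. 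For any nonzero polynomial, $\lambda h_\theta(\lambda)^2$ is unbounded on the spectrum of $\Delta_g$, so that sufficient condition never applies to a genuine polynomial filter on $M$. This is harmless, since your extended-value convention, not that remark, carries the general case.
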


\begin{proof}
Let $e := u - \uhat_\theta$. By Proposition~\ref{prop:exact}, $e = \sum_{i \ge 1} c_i \phi_i$ in $L^2(M)$ with $c_i := (\lambda_i^{-1} - h_\theta(\lambda_i)) f_i$. Using the identity $\norm{\nabla v}_{L^2}^2 = \langle v, \Delta_g v \rangle_{L^2}$ valid for $v \in H^1(M)$, and orthonormality of $\{\phi_i\}$, we obtain
\begin{equation}
\norm{\nabla e_N}_{L^2}^2 = \sum_{i=1}^N \lambda_i |c_i|^2
\end{equation}
for partial sums $e_N$. Under our standing choice of filters (polynomial/Chebyshev on a bounded interval), $\sum_i \lambda_i |c_i|^2 < \infty$, so $e \in H^1$ and $e_N \to e$ in $H^1$. Monotone convergence gives
\begin{equation}
\norm{\nabla e}_{L^2}^2 = \sum_{i \ge 1} \lambda_i |c_i|^2 = \sum_{i \ge 1} \lambda_i \bigl( \lambda_i^{-1} - h_\theta(\lambda_i) \bigr)^2 |f_i|^2.
\end{equation}
\end{proof}

\subsection{Band-Limited Bounds and Tail Decay}

Fix $\Lambda\ge \lambda_1$ and define spectral projectors
\begin{equation}
P_{\le \Lambda}f:=\sum_{\lambda_i\le \Lambda} f_i\phi_i,\qquad
P_{>\Lambda}f:=f-P_{\le\Lambda}f.
\end{equation}

\begin{theorem}[$L^2$ error: in-band approximation plus tail energy]
\label{thm:l2}
Let $f \in L^2_0(M)$ and $\uhat_\theta := h_\theta(\Delta_g) f$, where $h_\theta$ is bounded. Fix $\Lambda \ge \lambda_1$ and define
\begin{equation}
\vareps(\Lambda) := \sup_{\lambda \in [\lambda_1, \Lambda]} \bigl| h_\theta(\lambda) - \lambda^{-1} \bigr|,
\end{equation}
\begin{equation}
\delta(\Lambda) := \sup_{\lambda > \Lambda} \bigl| h_\theta(\lambda) - \lambda^{-1} \bigr|.
\end{equation}
Then
\begin{equation}
\norm{ u - \uhat_\theta }_{L^2}
\le
\vareps(\Lambda) \, \norm{ P_{\le \Lambda} f }_{L^2}
+
\delta(\Lambda) \, \norm{ P_{> \Lambda} f }_{L^2}.
\end{equation}
\end{theorem}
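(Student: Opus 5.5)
We start from the exact expansion in Proposition~\ref{prop:exact}. For $f\in L^2_0(M)$ it gives
\[
e := u-\uhat_\theta=\sum_{i\ge1}\bigl(\lambda_i^{-1}-h_\theta(\lambda_i)\bigr)f_i\phi_i
\]
in $L^2(M)$. We split the index set into $I_{\le}:=\{i\ge1:\lambda_i\le\Lambda\}$ and $I_{>}:=\{i\ge1:\lambda_i>\Lambda\}$, and write $e=e_{\le}+e_{>}$ for the corresponding partial sums. Both pieces converge in $L^2$, since they are subseries of an orthogonal series with square-summable coefficients. Because $h_\theta$ is a function of $\Delta_g$, it commutes with the spectral projectors. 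Hence $e_{\le}=P_{\le\Lambda}e$ and $e_{>}=P_{>\Lambda}e$, and equally $e_{\le}=(G-h_\theta(\Delta_g))P_{\le\Lambda}f$ and $e_{>}=(G-h_\theta(\Delta_g))P_{>\Lambda}f$.

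Next we bound each piece by Parseval. For $i\in I_{\le}$ we have $\lambda_i\in[\lambda_1,\Lambda]$, so $|\lambda_i^{-1}-h_\theta(\lambda_i)|\le\vareps(\Lambda)$. This gives
\[
\norm{e_{\le}}_{L^2}^2=\sum_{i\in I_{\le}}\bigl(\lambda_i^{-1}-h_\theta(\lambda_i)\bigr)^2|f_i|^2\le\vareps(\Lambda)^2\sum_{i\in I_{\le}}|f_i|^2=\vareps(\Lambda)^2\norm{P_{\le\Lambda}f}_{L^2}^2 .
\]
In the same way, for $i\in I_{>}$ we have $\lambda_i>\Lambda$, so $|\lambda_i^{-1}-h_\theta(\lambda_i)|\le\delta(\Lambda)$, and therefore $\norm{e_{>}}_{L^2}\le\delta(\Lambda)\norm{P_{>\Lambda}f}_{L^2}$.

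Finally, the triangle inequality gives $\norm{e}_{L^2}\le\norm{e_{\le}}_{L^2}+\norm{e_{>}}_{L^2}$, which is the claimed bound. The two pieces are orthogonal, so we in fact get the sharper Pythagorean bound $\norm{e}^2\le\vareps^2\norm{P_{\le\Lambda}f}^2+\delta^2\norm{P_{>\Lambda}f}^2$. The stated bound follows from it via $\sqrt{a^2+b^2}\le a+b$.

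There is no real obstacle here; the only points needing care are bookkeeping. First, $\delta(\Lambda)$ is finite because $h_\theta$ is bounded and $\lambda^{-1}\le\Lambda^{-1}$ on $(\Lambda,\infty)$. Second, the suprema are taken over continuous intervals while only the eigenvalues in them are used, so the bound holds a fortiori. Third, the case $I_{>}=\emptyset$ cannot occur on a compact manifold, but it would be harmless in any case.
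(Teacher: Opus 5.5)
Your proof is correct and matches the paper's argument. Both split the Parseval identity from Proposition~\ref{prop:exact} over the index sets $\{\lambda_i\le\Lambda\}$ and $\{\lambda_i>\Lambda\}$, bound the multipliers by $\vareps(\Lambda)^2$ and $\delta(\Lambda)^2$ respectively, and pass to norms via $\sqrt{x+y}\le\sqrt{x}+\sqrt{y}$; your intermediate Pythagorean bound is exactly the paper's inequality before taking square roots.
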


\begin{proof}
By Proposition~\ref{prop:exact}, $\norm{u-\uhat_\theta}_{L^2}^2 = \sum_{i\ge1}(\lambda_i^{-1}-h_\theta(\lambda_i))^2|f_i|^2$. Split the sum into $I_\le = \{i: \lambda_i \le \Lambda\}$ and $I_> = \{i: \lambda_i > \Lambda\}$. Bound each term by $\vareps(\Lambda)^2$ and $\delta(\Lambda)^2$ respectively, apply Parseval, and take square roots using $\sqrt{x+y}\le\sqrt{x}+\sqrt{y}$.
\end{proof}

\begin{corollary}[Tail decay under Sobolev regularity]
Assume $f \in H^s(M) \cap L^2_0(M)$ for some $s > 0$. Then
\begin{equation}
\norm{ P_{> \Lambda} f }_{L^2} \le \Lambda^{-s/2} \, \norm{ f }_{H^s}.
\end{equation}
Consequently, if $\sup_{\lambda > \Lambda} |h_\theta(\lambda) - \lambda^{-1}| \le C_{\mathrm{out}}$, then
\begin{equation}
\norm{ u - \uhat_\theta }_{L^2}
\le
\vareps(\Lambda) \, \norm{ f }_{L^2}
+
C_{\mathrm{out}} \, \Lambda^{-s/2} \, \norm{ f }_{H^s}.
\end{equation}
\end{corollary}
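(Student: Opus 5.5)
The plan is to use the spectral characterization of Sobolev norms on a closed manifold. For $f\in H^s(M)\cap L^2_0(M)$ I will fix the norm convention
\begin{equation}
\norm{f}_{H^s}^2 := \sum_{i\ge 0} (1+\lambda_i)^{s} |f_i|^2 ,
\end{equation}
which is equivalent to any other standard definition. Since $f_0=0$, this norm dominates $\sum_{i\ge1}\lambda_i^{s}|f_i|^2$, the homogeneous seminorm $\norm{\Delta_g^{s/2} f}_{L^2}^2$. If the paper intends a different but equivalent $H^s$ norm, the inequality holds up to a constant; with the convention above the constant is exactly one.

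\textbf{Tail estimate.} By Parseval, $\norm{P_{>\Lambda} f}_{L^2}^2 = \sum_{\lambda_i>\Lambda} |f_i|^2$. For every index in this sum we have $\lambda_i>\Lambda>0$, so $1 < (\lambda_i/\Lambda)^{s}$ because $s>0$. Inserting this factor gives
\begin{equation}
\norm{P_{>\Lambda} f}_{L^2}^2 \le \Lambda^{-s}\sum_{\lambda_i>\Lambda}\lambda_i^{s}|f_i|^2 \le \Lambda^{-s}\norm{f}_{H^s}^2 .
\end{equation}
Taking square roots gives the first claim. This is the only step requiring care, and the care lies entirely in the choice of $H^s$ norm convention; the estimate itself is a one-line Chebyshev/Markov-type argument.

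\textbf{Error bound.} I apply Theorem~\ref{thm:l2} with $\delta(\Lambda)\le C_{\mathrm{out}}$, which is exactly the stated hypothesis. The in-band projection is a contraction, so $\norm{P_{\le\Lambda} f}_{L^2}\le\norm{f}_{L^2}$ by Parseval. Combining these with the tail estimate yields
\begin{equation}
\norm{u-\uhat_\theta}_{L^2}\le \vareps(\Lambda)\norm{f}_{L^2} + C_{\mathrm{out}}\Lambda^{-s/2}\norm{f}_{H^s},
\end{equation}
as claimed.
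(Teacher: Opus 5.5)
Your proof is correct and follows the paper's argument. Both use the spectral characterization $\norm{f}_{H^s}^2=\sum_i(1+\lambda_i)^s|f_i|^2$ together with $\lambda_i^s\ge\Lambda^s$ on the tail, then apply Theorem~\ref{thm:l2} with $\delta(\Lambda)\le C_{\mathrm{out}}$ and $\norm{P_{\le\Lambda}f}_{L^2}\le\norm{f}_{L^2}$. Two of your choices are slightly more careful than the paper's: fixing the norm convention makes the constant exactly one (the paper writes only $\simeq$), and you correctly use only $\Lambda>0$, whereas the paper's proof invokes an unneeded $\Lambda\ge 1$.
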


\begin{proof}
Using the spectral characterization $\norm{f}_{H^s}^2 \simeq \sum_i (1+\lambda_i)^s |f_i|^2$ and $(1+\lambda_i)^s \ge \lambda_i^s \ge \Lambda^s$ for $\lambda_i > \Lambda \ge 1$, we obtain $\norm{P_{>\Lambda}f}_{L^2}^2 \le \Lambda^{-s}\norm{f}_{H^s}^2$. The consequence follows from Theorem~\ref{thm:l2}.
\end{proof}

\begin{theorem}[Energy/$H^1$ error bound on a frequency band]
\label{thm:h1}
Let $\Delta = \Delta_g$ and let $\Lambda \ge \lambda_1$. Define
\begin{equation}
\vareps_1(\Lambda) := \sup_{\lambda \in [\lambda_1, \Lambda]} \lambda^{1/2} \big| h_\theta(\lambda) - \lambda^{-1} \big|.
\end{equation}
Then for all $f \in L_0^2(M)$,
\begin{equation}
\norm{ \nabla (u - \uhat_\theta) }_{L^2}
\le
\vareps_1(\Lambda) \, \norm{ P_{\le \Lambda} f }_{L^2}
+
S_{\mathrm{out}}(\Lambda) \, \norm{ P_{> \Lambda} f }_{L^2},
\end{equation}
where $S_{\mathrm{out}}(\Lambda) := \sup_{\lambda > \Lambda} \lambda^{1/2} |h_\theta(\lambda) - \lambda^{-1}|$. Moreover, if $f \in H^s(M) \cap L_0^2(M)$ and $S_{\mathrm{out}}(\Lambda) \le C_{\mathrm{out},1}$, then
\begin{equation}
\norm{ \nabla (u - \uhat_\theta) }_{L^2}
\le
\vareps_1(\Lambda) \, \norm{ f }_{L^2}
+
C_{\mathrm{out},1} \, \Lambda^{-s/2} \, \norm{ f }_{H^s}.
\end{equation}
\end{theorem}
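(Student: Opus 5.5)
The plan is to follow the template of Theorem~\ref{thm:l2}, starting from the exact energy identity of Proposition~\ref{prop:energy} rather than the $L^2$ identity. By Proposition~\ref{prop:energy},
\begin{equation}
\norm{\nabla(u-\uhat_\theta)}_{L^2}^2=\sum_{i\ge1}\bigl(\lambda_i^{1/2}\,|\lambda_i^{-1}-h_\theta(\lambda_i)|\bigr)^2|f_i|^2 .
\end{equation}
I will split the index set into $I_\le=\{i:\lambda_i\le\Lambda\}$ and $I_>=\{i:\lambda_i>\Lambda\}$. For $i\in I_\le$ we have $\lambda_i\in[\lambda_1,\Lambda]$, so each weight $\lambda_i^{1/2}|h_\theta(\lambda_i)-\lambda_i^{-1}|$ is at most $\vareps_1(\Lambda)$. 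For $i\in I_>$ the weight is at most $S_{\mathrm{out}}(\Lambda)$.

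Parseval then gives $\sum_{I_\le}|f_i|^2=\norm{P_{\le\Lambda}f}_{L^2}^2$ and $\sum_{I_>}|f_i|^2=\norm{P_{>\Lambda}f}_{L^2}^2$. Combining these with $\sqrt{x+y}\le\sqrt x+\sqrt y$ yields the first inequality. Applying Proposition~\ref{prop:energy} requires its hypothesis that $\sum_i\lambda_i|c_i|^2<\infty$. If $S_{\mathrm{out}}(\Lambda)<\infty$, this follows directly from the same two bounds, so the identity is legitimate. If $S_{\mathrm{out}}(\Lambda)=\infty$, the inequality is trivial, with the convention $\infty\cdot 0$ handled by noting the tail sum is empty of mass when $P_{>\Lambda}f=0$. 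Alternatively, I would apply the argument to partial sums $e_N$ and pass to the limit by monotone convergence.

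For the second statement, I will bound $\norm{P_{\le\Lambda}f}_{L^2}\le\norm f_{L^2}$ trivially. For the tail, I will invoke the Sobolev tail corollary, $\norm{P_{>\Lambda}f}_{L^2}\le\Lambda^{-s/2}\norm f_{H^s}$, together with $S_{\mathrm{out}}(\Lambda)\le C_{\mathrm{out},1}$.

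The only delicate point is this corollary, which as proved uses $\Lambda\ge1$ and the spectral characterization of $\norm{\cdot}_{H^s}$. I would either inherit that assumption or replace it with $\lambda_i^s\ge\Lambda^s$ using the seminorm $\sum_i\lambda_i^s|f_i|^2$, which bounds the tail for any $\Lambda>0$ and is dominated by the $H^s$ norm up to the equivalence constant. Otherwise the proof is a direct transcription of the proof of Theorem~\ref{thm:l2} with weights multiplied by $\lambda^{1/2}$.
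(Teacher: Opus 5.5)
Your proposal is correct and follows essentially the same route as the paper: start from the energy identity of Proposition~\ref{prop:energy}, split the spectral sum at $\Lambda$, bound the weights by $\vareps_1(\Lambda)$ and $S_{\mathrm{out}}(\Lambda)$, apply Parseval and $\sqrt{x+y}\le\sqrt{x}+\sqrt{y}$, and then invoke the Sobolev tail corollary for the second bound. Your extra remarks are sensible refinements that the paper glosses over: the finiteness hypothesis needed to apply Proposition~\ref{prop:energy}, the fact that $\Lambda\ge 1$ is not really needed, and the norm-equivalence constant in the tail estimate.
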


\begin{proof}
By Proposition~\ref{prop:energy}, $\norm{\nabla(u-\uhat_\theta)}_{L^2}^2 = \sum_i \lambda_i(\lambda_i^{-1}-h_\theta(\lambda_i))^2|f_i|^2$. Split into in-band and out-of-band parts. In-band: $\lambda_i^{1/2}|h_\theta(\lambda_i)-\lambda_i^{-1}| \le \vareps_1(\Lambda)$, so the in-band sum is bounded by $\vareps_1(\Lambda)^2 \norm{P_{\le\Lambda}f}_{L^2}^2$. Out-of-band: bounded by $S_{\mathrm{out}}(\Lambda)^2 \norm{P_{>\Lambda}f}_{L^2}^2$. Take square roots. The Sobolev consequence follows from the tail decay estimate as in the previous corollary.
\end{proof}

\subsection{Stability and Operator-Norm Control}

\begin{theorem}[Lipschitz stability of the learned solution map]
\label{thm:stability}
Let $\Ghat_\theta = h_\theta(\Delta)$ and $\uhat_\theta(f) := \Ghat_\theta f$ on $L_0^2(M)$. Assume $h_\theta$ is bounded on the positive spectrum and define
\begin{equation}
\norm{\Ghat_\theta}_{L_0^2 \to L_0^2} := \sup_{i \ge 1} |h_\theta(\lambda_i)|.
\end{equation}
Then for all $f_1, f_2 \in L_0^2(M)$,
\begin{equation}
\norm{\uhat_\theta(f_1) - \uhat_\theta(f_2)}_{L^2}
\le
\norm{\Ghat_\theta}_{L_0^2 \to L_0^2} \, \norm{f_1 - f_2}_{L^2}.
\end{equation}
Moreover, if $\sup_{i \ge 1} \lambda_i^{1/2} |h_\theta(\lambda_i)| < \infty$, then
\begin{equation}
\norm{\nabla(\uhat_\theta(f_1) - \uhat_\theta(f_2))}_{L^2}
\le
\left( \sup_{i \ge 1} \lambda_i^{1/2} |h_\theta(\lambda_i)| \right) \norm{f_1 - f_2}_{L^2}.
\end{equation}
\end{theorem}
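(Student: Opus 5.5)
The plan is to exploit linearity of $\Ghat_\theta$ together with the spectral representation already used in Proposition~\ref{prop:exact}. Both estimates then reduce to bounding a weighted sequence of coefficients.

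First, set $g := f_1 - f_2 \in L_0^2(M)$ and expand $g = \sum_{i\ge1} g_i\phi_i$ with $g_i = \langle g,\phi_i\rangle_{L^2}$. Since $h_\theta(\Delta)$ is defined by functional calculus, it is linear. Hence $\uhat_\theta(f_1)-\uhat_\theta(f_2) = h_\theta(\Delta) g = \sum_{i\ge1} h_\theta(\lambda_i) g_i\phi_i$, with convergence in $L^2(M)$. This is exactly the argument of Proposition~\ref{prop:exact}, with the multiplier $\lambda_i^{-1}-h_\theta(\lambda_i)$ replaced by $h_\theta(\lambda_i)$. The series converges because $h_\theta$ is bounded on the positive spectrum, so $\sum_i |h_\theta(\lambda_i)|^2|g_i|^2 \le (\sup_i|h_\theta(\lambda_i)|)^2\sum_i|g_i|^2 < \infty$.

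For the $L^2$ estimate, Parseval's identity gives
\[
\norm{h_\theta(\Delta)g}_{L^2}^2=\sum_{i\ge1}|h_\theta(\lambda_i)|^2|g_i|^2\le \Big(\sup_{i\ge1}|h_\theta(\lambda_i)|\Big)^2\norm{g}_{L^2}^2 .
\]
Taking square roots yields the first inequality, with the constant being the quantity the statement calls $\norm{\Ghat_\theta}_{L_0^2\to L_0^2}$.

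For the gradient estimate, I would follow the proof of Proposition~\ref{prop:energy}. Let $w_N := \sum_{i=1}^N h_\theta(\lambda_i)g_i\phi_i$ be the partial sums. Using $\norm{\nabla v}_{L^2}^2=\langle v,\Delta v\rangle_{L^2}$ for $v \in H^1(M)$, together with orthonormality, gives
\[
\norm{\nabla w_N}_{L^2}^2=\sum_{i=1}^N\lambda_i|h_\theta(\lambda_i)|^2|g_i|^2 .
\]
Write $C := \sup_{i\ge1}\lambda_i^{1/2}|h_\theta(\lambda_i)| < \infty$. Then the right-hand side is bounded by $C^2\sum_{i=1}^N|g_i|^2 \le C^2\norm{g}_{L^2}^2$.

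The main point needing care is passing to the limit $N\to\infty$. Applying the same identity to $w_M - w_N$ shows that $(w_N)$ is Cauchy in $H^1$. Since $w_N \to h_\theta(\Delta)g$ in $L^2$, the limit lies in $H^1$ and $\nabla w_N \to \nabla h_\theta(\Delta)g$ in $L^2$. Taking limits in the partial-sum bound then gives $\norm{\nabla h_\theta(\Delta) g}_{L^2}\le C\norm{g}_{L^2}$, which is the second inequality.

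Finally, the projection $\Pi_0$ can be dropped throughout. It only removes the constant mode, and that mode is absent because $g \in L_0^2(M)$.
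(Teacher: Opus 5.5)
Your proposal is correct and follows the same route as the paper: reduce to $g=f_1-f_2$ by linearity, expand in the eigenbasis, apply Parseval to get the $L^2$ bound, and use $\|\nabla v\|_{L^2}^2=\sum_{i\ge1}\lambda_i|h_\theta(\lambda_i)|^2|g_i|^2$ to get the gradient bound. Your partial-sum argument, showing the sums are Cauchy in $H^1$, makes rigorous the limit passage that the paper's short proof leaves implicit.
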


\begin{proof}
Let $f := f_1 - f_2 \in L_0^2(M)$ with expansion $f = \sum_{i\ge1} f_i \phi_i$. Then $\Ghat_\theta f = \sum_i h_\theta(\lambda_i) f_i \phi_i$, so by Parseval,
\begin{equation}
\norm{\Ghat_\theta f}_{L^2}^2 = \sum_{i \ge 1} |h_\theta(\lambda_i)|^2 |f_i|^2 \le \left(\sup_i |h_\theta(\lambda_i)|\right)^2 \norm{f}_{L^2}^2.
\end{equation}
Taking square roots gives the first bound. The operator norm identity follows by testing on $\phi_j$. For the gradient bound, use $\norm{\nabla v}_{L^2}^2 = \sum_i \lambda_i |h_\theta(\lambda_i)|^2 |f_i|^2$.
\end{proof}

\subsection{From Weak-Form Training to Energy Error}

\begin{proposition}[Residual controls energy error]
\label{prop:residual}
Let $f \in L_0^2(M)$, $u$ the exact solution, and $\uhat$ any approximation in $H^1(M) \cap L_0^2(M)$. Define the weak residual functional
\begin{equation}
\mathcal{R}_\theta(f)(\varphi)
:=
\int_M \langle \nabla \uhat, \nabla \varphi \rangle_g \, \mathrm{d}V_g
-
\int_M f \, \varphi \, \mathrm{d}V_g.
\end{equation}
Then for all $\varphi \in H^1(M)$,
\begin{equation}
\mathcal{R}_\theta(f)(\varphi) = -\int_M \langle \nabla e, \nabla \varphi \rangle_g \, \mathrm{d}V_g,
\end{equation}
where $e = u - \uhat$. Consequently,
\begin{equation}
\norm{ \nabla e }_{L^2}
=
\sup_{\varphi \in H^1(M) \setminus \{0\}}
\frac{|\mathcal{R}_\theta(f)(\varphi)|}{\norm{ \nabla \varphi }_{L^2}}.
\end{equation}
\end{proposition}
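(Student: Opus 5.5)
The identity is Galerkin orthogonality, and the norm equality is a Riesz/duality argument in the energy seminorm. I will prove them in that order.

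\textbf{Step 1 (residual identity).} Since $u$ is the weak solution, for every $\varphi\in H^1(M)$ we have $\int_M f\varphi\,\mathrm{d}V_g = \int_M\langle\nabla u,\nabla\varphi\rangle_g\,\mathrm{d}V_g$. This holds for all $\varphi\in H^1(M)$, not only mean-zero ones, because $f\in L^2_0(M)$ makes both sides vanish on constants. Substituting into the definition of $\mathcal{R}_\theta(f)(\varphi)$ and using bilinearity gives
\[
\mathcal{R}_\theta(f)(\varphi)=\int_M\langle\nabla(\uhat-u),\nabla\varphi\rangle_g\,\mathrm{d}V_g=-\int_M\langle\nabla e,\nabla\varphi\rangle_g\,\mathrm{d}V_g .
\]

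\textbf{Step 2 (upper bound).} By Cauchy--Schwarz in $L^2$, $|\mathcal{R}_\theta(f)(\varphi)|\le\norm{\nabla e}_{L^2}\norm{\nabla\varphi}_{L^2}$. Hence the supremum is at most $\norm{\nabla e}_{L^2}$.

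\textbf{Step 3 (attainment).} If $\nabla e\neq 0$, take $\varphi=e$. This is admissible because $e=u-\uhat\in H^1(M)\cap L^2_0(M)$, and $\norm{\nabla e}_{L^2}>0$. The identity of Step 1 then gives $|\mathcal{R}_\theta(f)(e)|/\norm{\nabla e}_{L^2}=\norm{\nabla e}_{L^2}$, so the supremum is attained and equality holds.

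\textbf{Main obstacle: the degenerate quotient.} The statement takes the supremum over $\varphi\in H^1(M)\setminus\{0\}$, so nonzero constants are included, and for them $\norm{\nabla\varphi}_{L^2}=0$. For such $\varphi$ the residual also vanishes, giving an indeterminate $0/0$. I will resolve this by observing that $\mathcal{R}_\theta(f)$ annihilates constants, so it is well defined on the quotient $H^1(M)/\mathbb{R}$. On that quotient $\norm{\nabla\cdot}_{L^2}$ is a genuine norm, by the Poincar\'e inequality using $\lambda_1>0$. Equivalently, one interprets the supremum over $\varphi$ with $\nabla\varphi\neq0$, or restricts to $\varphi\in H^1(M)\cap L^2_0(M)\setminus\{0\}$ without changing the value.

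If $\nabla e=0$, then $e$ is constant, and since $e$ has mean zero, $e=0$. The residual then vanishes identically, so both sides are zero under this convention. This also identifies the right-hand side as the dual norm of the residual in $(H^1/\mathbb{R})^*$, which matches the interpretation of the weak loss given earlier.
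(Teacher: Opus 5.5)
Your proposal is correct and follows the paper's own proof: you subtract the weak form satisfied by $u$ to get the residual identity, then use Cauchy--Schwarz for the upper bound and test with $\varphi = e$ to attain it. Your handling of the constant test functions, where the quotient is $0/0$, and of the case $\nabla e = 0$ is a valid refinement that the paper's proof leaves implicit.
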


\begin{proof}
Since $u$ satisfies the weak form, subtracting gives $\mathcal{R}_\theta(f)(\varphi) = \int_M \langle \nabla(\uhat-u), \nabla\varphi\rangle_g \mathrm{d}V_g = -\int_M \langle \nabla e, \nabla\varphi\rangle_g \mathrm{d}V_g$. The norm identity follows from the Cauchy-Schwarz inequality and testing $\varphi = e$.
\end{proof}

\subsection{Observable Diagnostics}

We now introduce the three diagnostics that make OOD error predictable from test-time quantities.

\begin{definition}[Observable spectral diagnostics]
Given a learned filter $h_\theta$, a training band $[0,\Lambda]$, and a test input $f$ with spectral coefficients $\{f_i\}$, define
\begin{align}
\vareps_{\mathrm{sup}}(\Lambda) &:= \sup_{\lambda\in[\lambda_1,\Lambda]} \lambda^{1/2}\abs{h_\theta(\lambda)-\lambda^{-1}},\\
\vareps_{\mathrm{rms}}(\Lambda) &:= \Big(\tfrac{1}{N_\Lambda}\sum_{\lambda_i\le\Lambda} \lambda_i\big(h_\theta(\lambda_i)-\lambda_i^{-1}\big)^2\Big)^{1/2},\\
\vareps_{\mathrm{eff}}(f) &:= \Big(\tfrac{\sum_{\lambda_i\le\Lambda} \lambda_i(h_\theta(\lambda_i)-\lambda_i^{-1})^2|f_i|^2}{\sum_{\lambda_i\le\Lambda}|f_i|^2}\Big)^{1/2}.
\end{align}
\end{definition}

The key insight is that $\vareps_{\mathrm{eff}}(f)$ is observable at test time: it depends only on the learned filter $h_\theta$ and the test input's spectrum $\{|f_i|^2\}$, both of which are computable without access to the ground-truth solution $u$. The predicted energy error is then
\begin{equation}
\text{Predicted error} \approx \vareps_{\mathrm{eff}}(f) \, \norm{P_{\le\Lambda} f}_{L^2}.
\end{equation}
In contrast, $\vareps_{\mathrm{sup}}$ and $\vareps_{\mathrm{rms}}$ are distribution-agnostic summaries: they describe the worst-case and average spectral mismatch over the training band, respectively, but they do not account for how a specific test input weights that mismatch.

\section{Experiments: Validating Observable OOD Prediction}

This section empirically validates the observable consequences of our theoretical decomposition. Across four controlled experiments, we show that: (i) in-distribution (ID) behavior is well explained by the in-band term; (ii) when OOD shift introduces high-frequency tails, the tail term dominates; (iii) for in-band distribution shifts, a distribution-dependent effective $\vareps$ tracks errors more sharply than bandwise aggregates; and (iv) under general shifts mixing both effects, the two-term decomposition remains predictive.

\subsection{Common Setup}

\textbf{PDE and domain.} We consider the 2D Poisson equation on a periodic square grid, with mean-zero solutions enforced. The reference solution is computed spectrally (FFT-based solver), and all errors are reported in an energy norm $\norm{\nabla (u_{\text{pred}} - u_{\text{true}})}_{L^2}$.

\textbf{Model class.} The learned solver is a spectral Green's-function surrogate represented as a Chebyshev polynomial filter $h_\theta(\lambda)$ acting on the Laplacian eigenvalues $\lambda$. The prediction is produced by applying this learned filter to $\fhat$ in Fourier space and transforming back.

\textbf{Training objective.} The model is trained using a weak-form residual objective with random test functions, which probes the PDE constraint without explicitly differentiating through an exact PDE solver.

\textbf{Observable theory terms.} For each trained model, we compute:
\begin{itemize}
    \item a conservative in-band supremum metric $\vareps_{\text{sup}}(\Lambda_{\text{train}})$,
    \item an in-band RMS metric $\vareps_{\text{rms}}(\Lambda_{\text{train}})$,
    \item a sample-dependent effective metric $\vareps_{\text{eff}}(f)$ (distribution-aware, defined by reweighting $\abs{\fhat}^2$ over the in-band spectrum),
    \item and, when applicable, an explicit tail magnitude $\abs{P_{\text{out}} f}$ (energy outside the training band).
\end{itemize}
We then correlate the observed energy error with the corresponding predicted terms (e.g., $\vareps \cdot \abs{P_{\text{in}} f}$, tail magnitude, and combined predictors).

\subsection{Exp A: In-Distribution Observable Validation}

Train and test on forcings supported in the same in-band region ($\abs{k} \le K_{\text{train}}$). This isolates the in-band approximation component.

The model attains a small mean energy error and strong correlation between the observed error and in-band predictors. Concretely, the final in-band metrics are $\vareps_{\text{sup}} = 1.583 \times 10^{-1}$ and $\vareps_{\text{rms}} = 2.388 \times 10^{-2}$, with mean ID energy error $5.45 \times 10^{-5}$. The correlation between $\norm{\nabla e}$ and $\vareps_{\text{sup}} \norm{f}$ (equivalently $\vareps_{\text{rms}} \norm{f}$ in this run) is $0.980$, confirming that the in-band term is directly observable and predictive under matched train/test spectra.

\begin{table}[!t]
\centering
\caption{Summary of Experiment A: In-distribution band-limited validation.}
\label{tab:expA}
\begin{tabular}{@{}ll@{}}
\toprule
\textbf{Setting / Metric} & \textbf{Value} \\
\midrule
Device & CUDA \\
Grid size ($n$) & $128 \times 128$ \\
Training band limit ($K_{\text{train}}$) & 12 \\
Chebyshev polynomial degree & 40 \\
Training steps & 4{,}000 \\
Batch size & 16 \\
Number of test functions per $f$ & 8 \\
Forcing bandwidth ($\phi_{\text{band}}$) & 16 \\
Forcing log-std & 0.8 \\
Maximum Laplacian eigenvalue in training ($\lambda_{\max}^{\text{train}}$) & $7.738 \times 10^{3}$ \\
\midrule
Final $\varepsilon_{\mathrm{sup}}(\Lambda_{\text{train}})$ & $1.583 \times 10^{-1}$ \\
Final $\varepsilon_{\mathrm{rms}}(\Lambda_{\text{train}})$ & $2.388 \times 10^{-2}$ \\
Mean energy error (ID) & $5.453 \times 10^{-5}$ \\
Correlation: $\|\nabla e\|$ vs. $\varepsilon_{\mathrm{sup}} \|f\|$ & 0.980 \\
Correlation: $\|\nabla e\|$ vs. $\varepsilon_{\mathrm{rms}} \|f\|$ & 0.980 \\
\bottomrule
\end{tabular}
\end{table}

\begin{figure*}[!t]
    \centering
    \includegraphics[width=\linewidth]{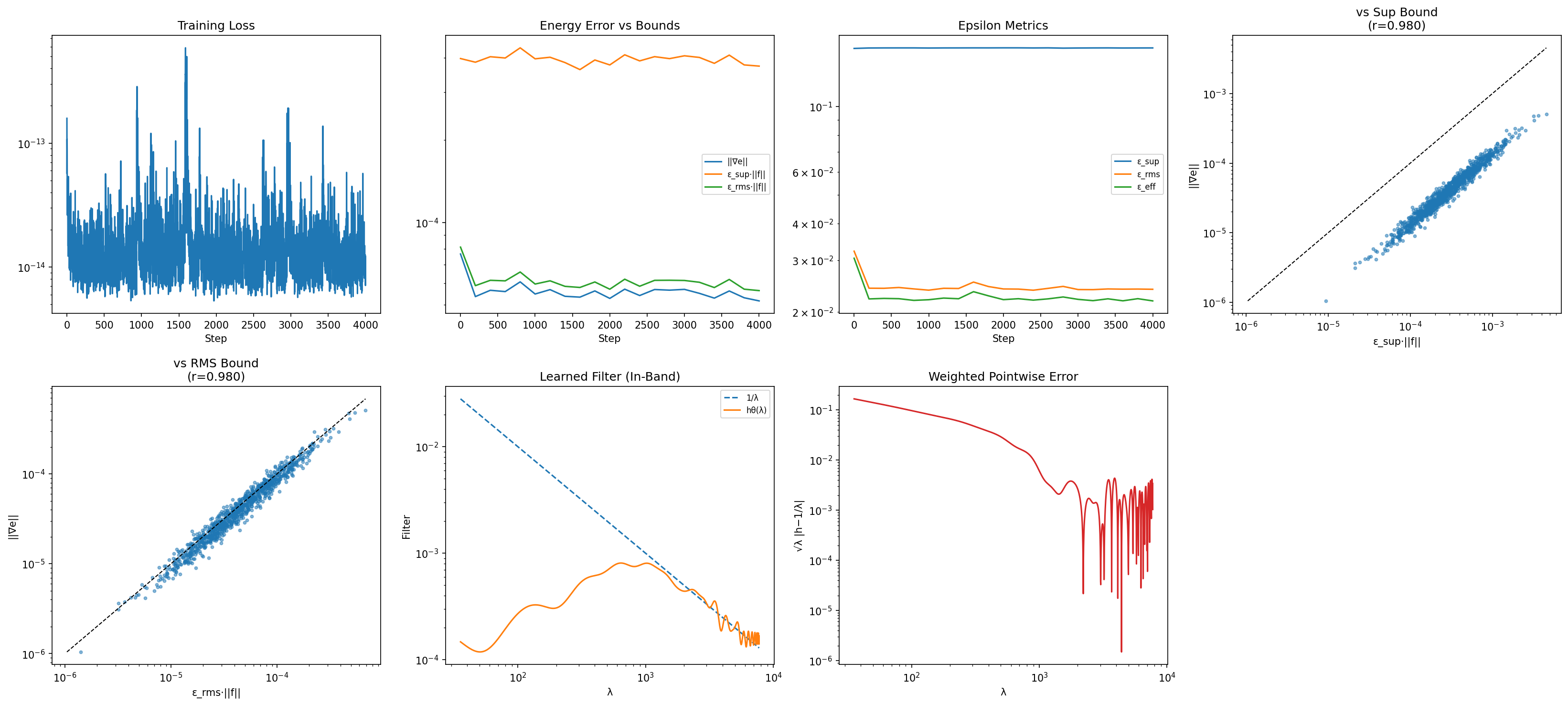}
    \caption{Summary of Experiment A.}
    \label{fig:expA}
\end{figure*}

\subsection{Exp B: OOD Generalization via Out-of-Band High-Frequency Tails}

Train on in-band forcings ($\abs{k} \le K_{\text{train}}$) but evaluate on OOD mixtures that include substantial mass in a higher-frequency band. This creates explicit out-of-band energy while keeping the in-band component comparable.

The tail magnitude $\abs{P_{\text{out}} f}$ dominates predictability of the energy error. In particular, while $\vareps_{\text{sup}}$ and $\vareps_{\text{rms}}$ remain small (final $\vareps_{\text{sup}} = 1.582 \times 10^{-1}$, $\vareps_{\text{rms}} = 2.377 \times 10^{-2}$), the correlation of error with the in-band terms is only $\approx 0.314$, whereas the correlation with the tail energy is $0.940$. A combined predictor that adds the in-band term and the tail term achieves correlation $\approx 0.942$, matching the theoretical two-term structure.

When OOD shift manifests primarily as high-frequency leakage outside the training band, the in-band $\vareps$ alone is insufficient; the explicitly observable tail term explains the generalization gap.

\begin{table}[!t]
\centering
\caption{Summary of Experiment B: Out-of-distribution generalization with high-frequency tails.}
\label{tab:expB}
\begin{tabular}{@{}ll@{}}
\toprule
\textbf{Setting / Metric} & \textbf{Value} \\
\midrule
Device & CUDA \\
Grid size ($n$) & $128 \times 128$ \\
Training band limit ($K_{\text{train}}$) & 12 \\
OOD band limit ($K_{\text{ood}}$) & 28 \\
Chebyshev polynomial degree & 40 \\
Training steps & 4{,}000 \\
Batch size & 16 \\
Forcing bandwidth ($\phi_{\text{band}}$) & 16 \\
Number of test functions & 8 \\
$\lambda_{\max}^{\text{train}}$ & $7.738 \times 10^{3}$ \\
\midrule
OOD mixture log-std parameters & 
$\begin{aligned}
&\text{alpha: } 0.9 \\
&\text{in-band: } 0.8 \\
&\text{out-of-band: } 0.8
\end{aligned}$ \\
\midrule
Final $\varepsilon_{\mathrm{sup}}(\Lambda_{\text{train}})$ & $1.582 \times 10^{-1}$ \\
Final $\varepsilon_{\mathrm{rms}}(\Lambda_{\text{train}})$ & $2.377 \times 10^{-2}$ \\
\midrule
Pearson correlation (error vs.\ $\varepsilon_{\mathrm{sup}} \|P_{\text{in}} f\|$) & 0.314 \\
Pearson correlation (error vs.\ $\varepsilon_{\mathrm{rms}} \|P_{\text{in}} f\|$) & 0.314 \\
Pearson correlation (error vs.\ $\|P_{\text{out}} f\|$) & 0.940 \\
Pearson correlation (error vs.\ combined predictor) & 0.940 \\
\bottomrule
\end{tabular}
\end{table}

\begin{figure*}[!t]
    \centering
    \includegraphics[width=\linewidth]{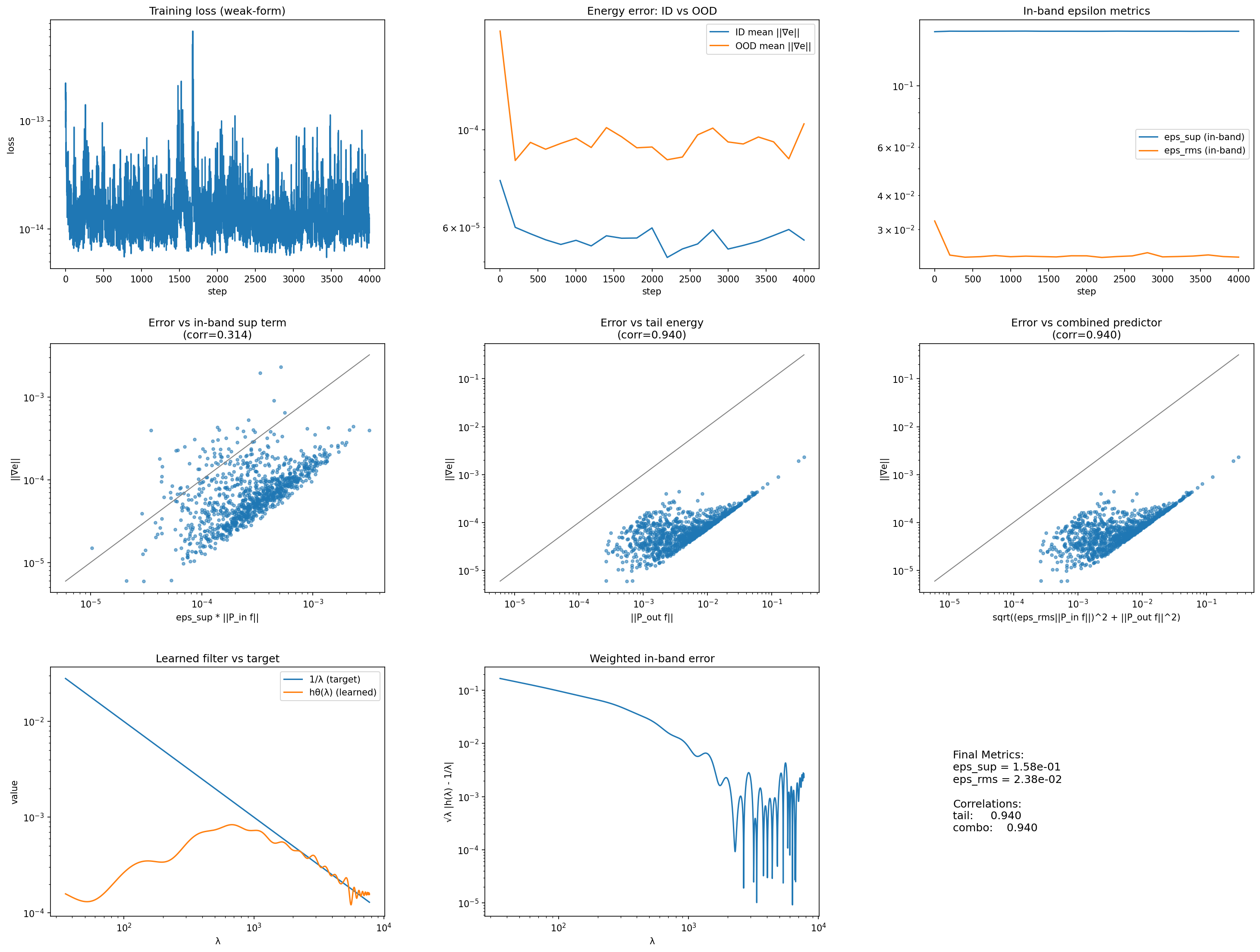}
    \caption{Summary of Experiment B.}
    \label{fig:expB}
\end{figure*}

\subsection{Exp C: In-Band Spectral Distribution Shift}

All test forcings remain strictly in-band, but the distribution within the band shifts: (ID) uniform in-band; (LOW) energy concentrated at low frequencies; (HIGH) energy concentrated near the band edge. This removes the out-of-band term, isolating the distribution dependence of the in-band approximation.

The global in-band bounds are conservative and distribution-agnostic (here $\vareps_{\text{sup}} = 5.288 \times 10^{-1}$, $\vareps_{\text{rms}} = 1.774 \times 10^{-1}$), but the realized errors vary substantially across LOW vs HIGH spectra (mean $\norm{\nabla e}$: ID $4.05 \times 10^{-2}$, LOW $5.22 \times 10^{-3}$, HIGH $3.26 \times 10^{-2}$). 
Crucially, the effective metric $\vareps_{\text{eff}}(f)$ tracks these shifts: its mean is much smaller for LOW ($\approx 6.94 \times 10^{-2}$) and larger for HIGH ($\approx 1.94 \times 10^{-1}$), aligning with the observed ordering of errors. Moreover, the correlation between the error and the effective predictor $\vareps_{\text{eff}}(f) \norm{f}$ reaches $1.000$ across ID/LOW/HIGH, while correlations with global $\vareps_{\text{sup}} \norm{f}$ and $\vareps_{\text{rms}} \norm{f}$ are slightly weaker.

Even without any out-of-band tail, shifting spectral mass within the training band changes the realized approximation difficulty. A distribution-aware effective $\vareps$ provides a sharper, directly measurable explanation than bandwise aggregate summaries.

\begin{table*}[!t]
\centering
\caption{Summary of Experiment C: In-band spectral distribution shift (no out-of-band tail).}
\label{tab:expC}
\begin{tabular}{@{}lccc@{}}
\toprule
\textbf{Metric} & \textbf{ID} & \textbf{LOW} & \textbf{HIGH} \\
\midrule
Device & \multicolumn{3}{c}{CUDA} \\
Grid size ($n$) & \multicolumn{3}{c}{$128 \times 128$} \\
Training band limit ($K_{\text{train}}$) & \multicolumn{3}{c}{12} \\
Low-frequency cutoff ($K_{\text{low}}$) & \multicolumn{3}{c}{4} \\
High-frequency min ($K_{\text{hi\_min}}$) & \multicolumn{3}{c}{8} \\
Chebyshev degree & \multicolumn{3}{c}{40} \\
Training steps & \multicolumn{3}{c}{4{,}000} \\
Batch size & \multicolumn{3}{c}{16} \\
Forcing log-std & \multicolumn{3}{c}{0.8} \\
Forcing bandwidth ($\phi_{\text{band}}$) & \multicolumn{3}{c}{16} \\
Test functions per $f$ & \multicolumn{3}{c}{8} \\
$\lambda_{\max}^{\text{train}}$ & \multicolumn{3}{c}{$7.738 \times 10^{3}$} \\
\midrule
\textbf{Global in-band metrics} & \multicolumn{3}{c}{} \\
$\varepsilon_{\mathrm{sup}}(\Lambda_{\text{train}})$ & \multicolumn{3}{c}{$5.288 \times 10^{-1}$} \\
$\varepsilon_{\mathrm{rms}}(\Lambda_{\text{train}})$ & \multicolumn{3}{c}{$1.774 \times 10^{-1}$} \\
\midrule
\textbf{Per-distribution results (mean)} & & & \\
Energy error $\|\nabla e\|$ & $4.045 \times 10^{-2}$ & $5.219 \times 10^{-3}$ & $3.262 \times 10^{-2}$ \\
Forcing norm $\|f\|$ & $2.286 \times 10^{-1}$ & $7.539 \times 10^{-2}$ & $1.681 \times 10^{-1}$ \\
Effective $\varepsilon_{\mathrm{eff}}(f)$ & $1.772 \times 10^{-1}$ & $6.938 \times 10^{-2}$ & $1.943 \times 10^{-1}$ \\
\midrule
\multicolumn{4}{c}{Correlation (error vs.\ $\varepsilon_{\mathrm{eff}} \|f\|$): 1.000 across ID/LOW/HIGH} \\
\bottomrule
\end{tabular}
\end{table*}

\begin{figure*}[!t]
\centering
\begin{subfigure}[t]{\linewidth}
    \centering
    \includegraphics[width=\linewidth]{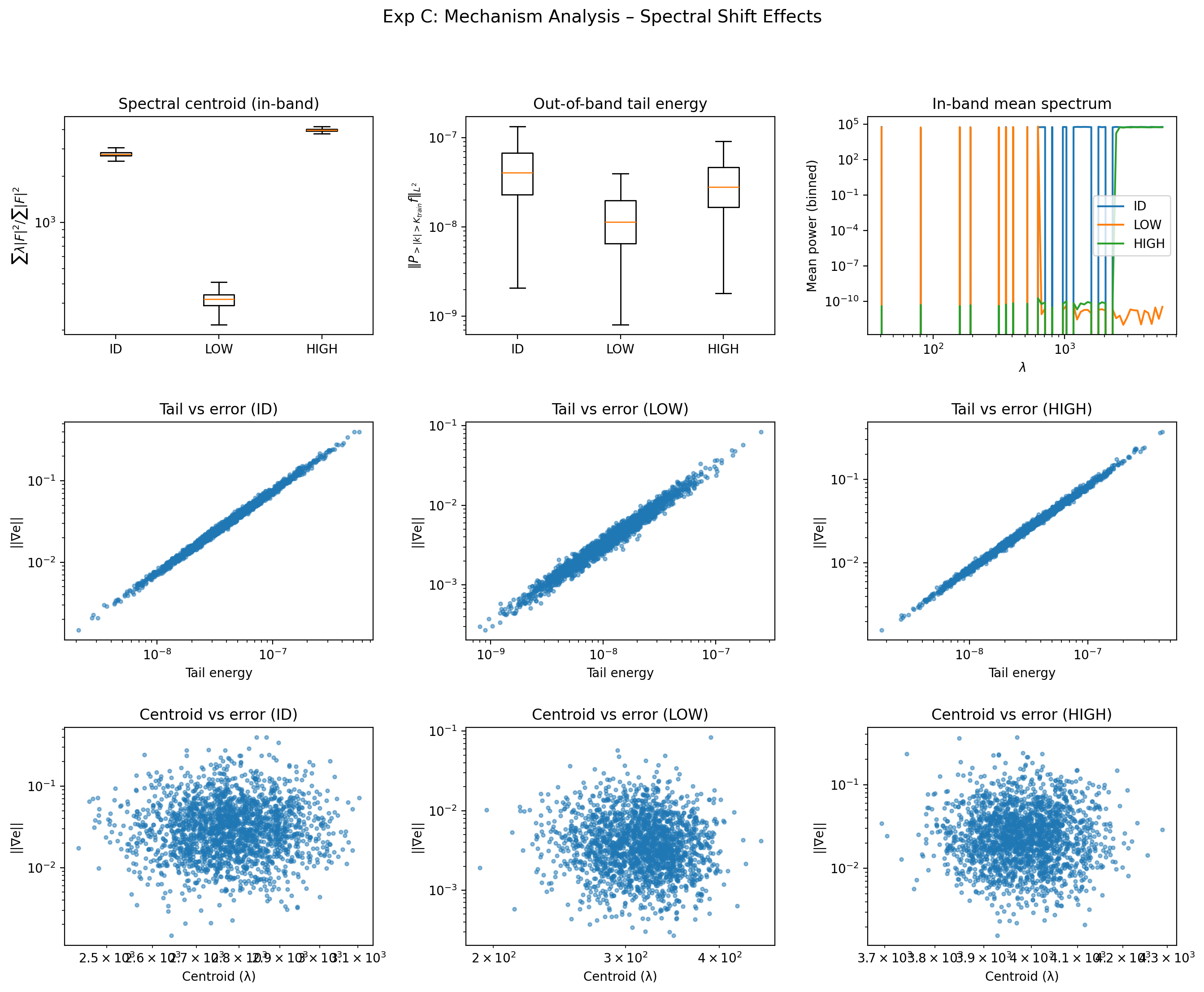}
    \label{fig:expC_1}
\end{subfigure}
\vspace{1em}
\begin{subfigure}[t]{\linewidth}
    \centering
    \includegraphics[width=\linewidth]{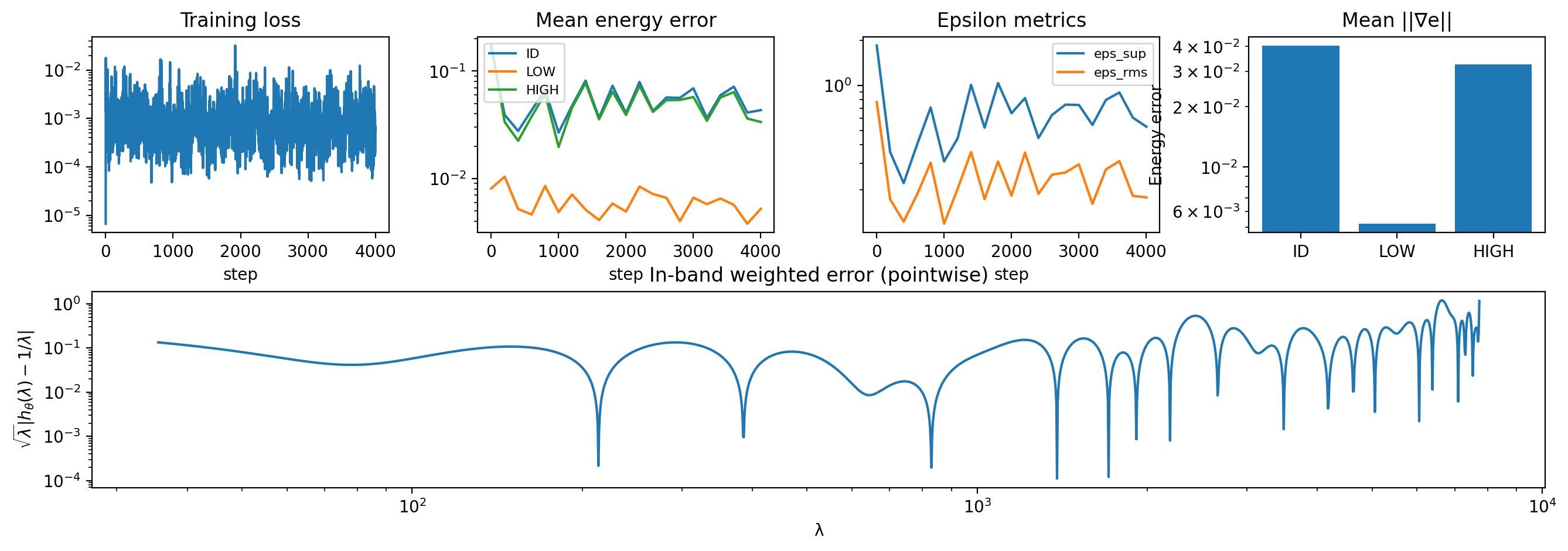}
    \label{fig:expC_2}
\end{subfigure}
\caption{Summary of Experiment C (part 1).}
\label{fig:expC1}
\end{figure*}

\begin{figure*}[!t]
    \centering
    \includegraphics[width=\linewidth]{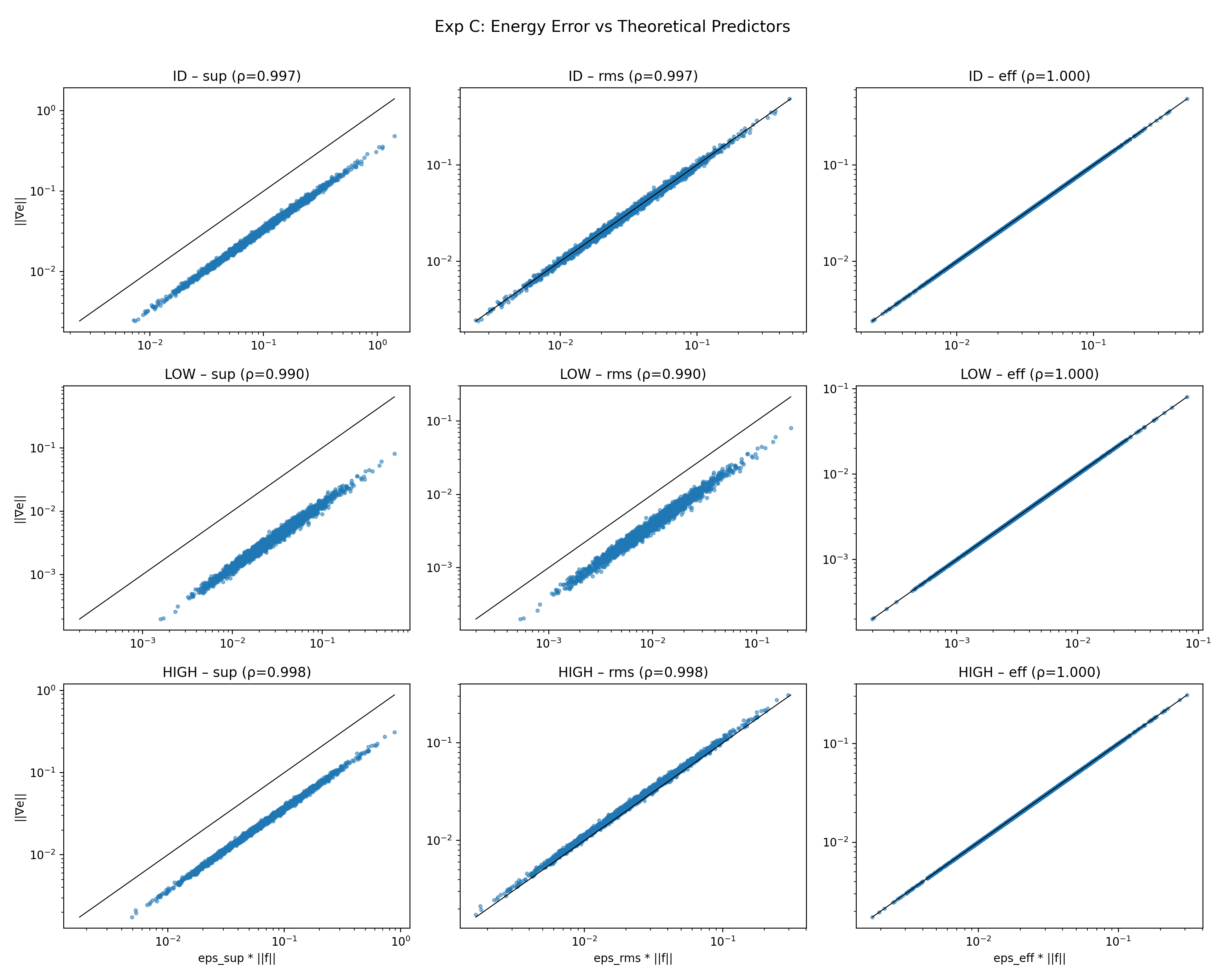}
    \caption{Summary of Experiment C (part 2).}
    \label{fig:expC2}
\end{figure*}

\subsection{Exp D: General Shift}

Combine both mechanisms: (i) in-band redistribution (LOW/HIGH variants) and (ii) an explicit out-of-band tail band. Evaluation spans several distributions (ID/LOW/HIGH/TAIL and mixtures), enabling a direct test of the full decomposition under realistic compound shifts.

Across all distributions, the in-band effective predictor remains maximally aligned with the error for the in-band component (correlations $\approx 1.0$ for $\vareps_{\text{eff}} \abs{P_{\text{in}} f}$), while the tail magnitude $\abs{P_{\text{out}} f}$ is also strongly correlated with error (typically $0.91$ to $0.98$, depending on the distribution).  
For example, in the TAIL setting the correlation between error and $\abs{P_{\text{out}} f}$ is $0.9755$, while combined predictors (in-band plus tail) remain comparably high ($\approx 0.9755$ to $0.9757$). In contrast, a simple in-band spectral centroid has near-zero correlation (e.g., ID: $-0.051$; LOW: $-0.100$; HIGH: $0.016$), indicating that where the energy sits in-band is not by itself sufficient once out-of-band effects are present; the decomposition terms are the correct observables.

Under compound shifts, the two-mechanism interpretation remains stable: the in-band effective $\vareps$ explains the approximation component, while $\abs{P_{\text{out}} f}$ explains the generalization loss induced by spectral leakage.

\begin{table*}[!t]
\centering
\small
\caption{Summary of Experiment D: General shift (in-band redistribution plus out-of-band tail).}
\label{tab:expD}
\begin{tabular}{@{}lcccccc@{}}
\toprule
\textbf{Distribution} & ID & LOW & HIGH & TAIL & LOW\_TAIL & HIGH\_TAIL \\
\midrule
Mean $\|\nabla e\|$ &
$5.39\!\times\!10^{-5}$ &
$5.56\!\times\!10^{-5}$ &
$7.73\!\times\!10^{-6}$ &
$5.80\!\times\!10^{-5}$ &
$5.70\!\times\!10^{-5}$ &
$1.28\!\times\!10^{-5}$ \\
Corr. (combined predictor) &
0.956 & 0.921 & 0.937 & 0.976 & 0.975 & 0.999 \\
Corr. (spectral centroid) &
$-0.051$ & $-0.100$ & $0.016$ & $-0.061$ & $-0.089$ & $-0.005$ \\
\bottomrule
\end{tabular}
\end{table*}

\begin{figure*}[!t]
\centering
\begin{subfigure}[t]{\linewidth}
    \centering
    \includegraphics[width=\linewidth]{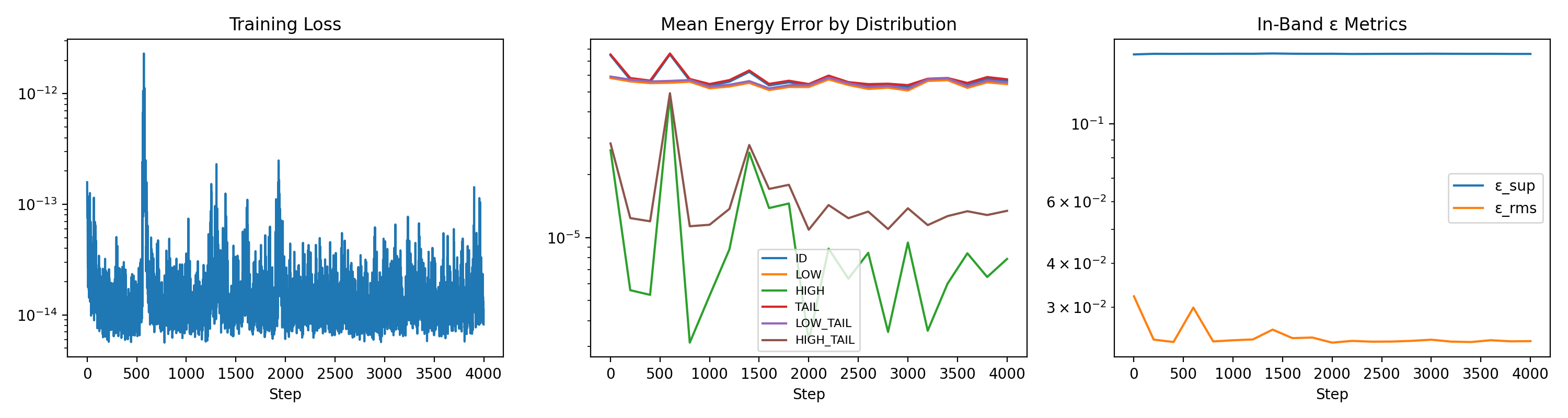}
    \label{fig:expD_ID}
\end{subfigure}
\vspace{1em}
\begin{subfigure}[t]{\linewidth}
    \centering
    \includegraphics[width=\linewidth]{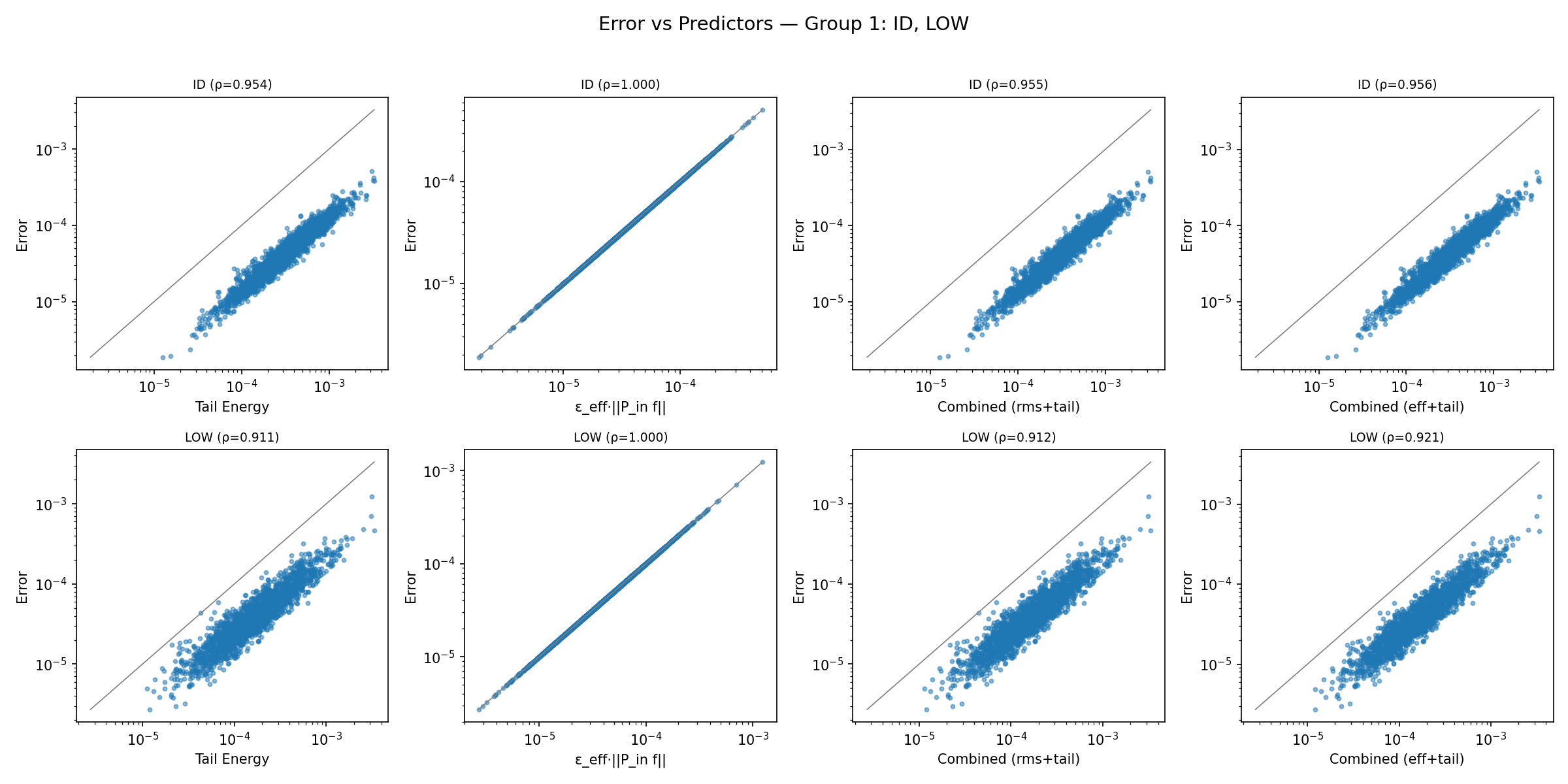}
    \label{fig:expD_LOW}
\end{subfigure}
\vspace{1em}
\begin{subfigure}[t]{\linewidth}
    \centering
    \includegraphics[width=\linewidth]{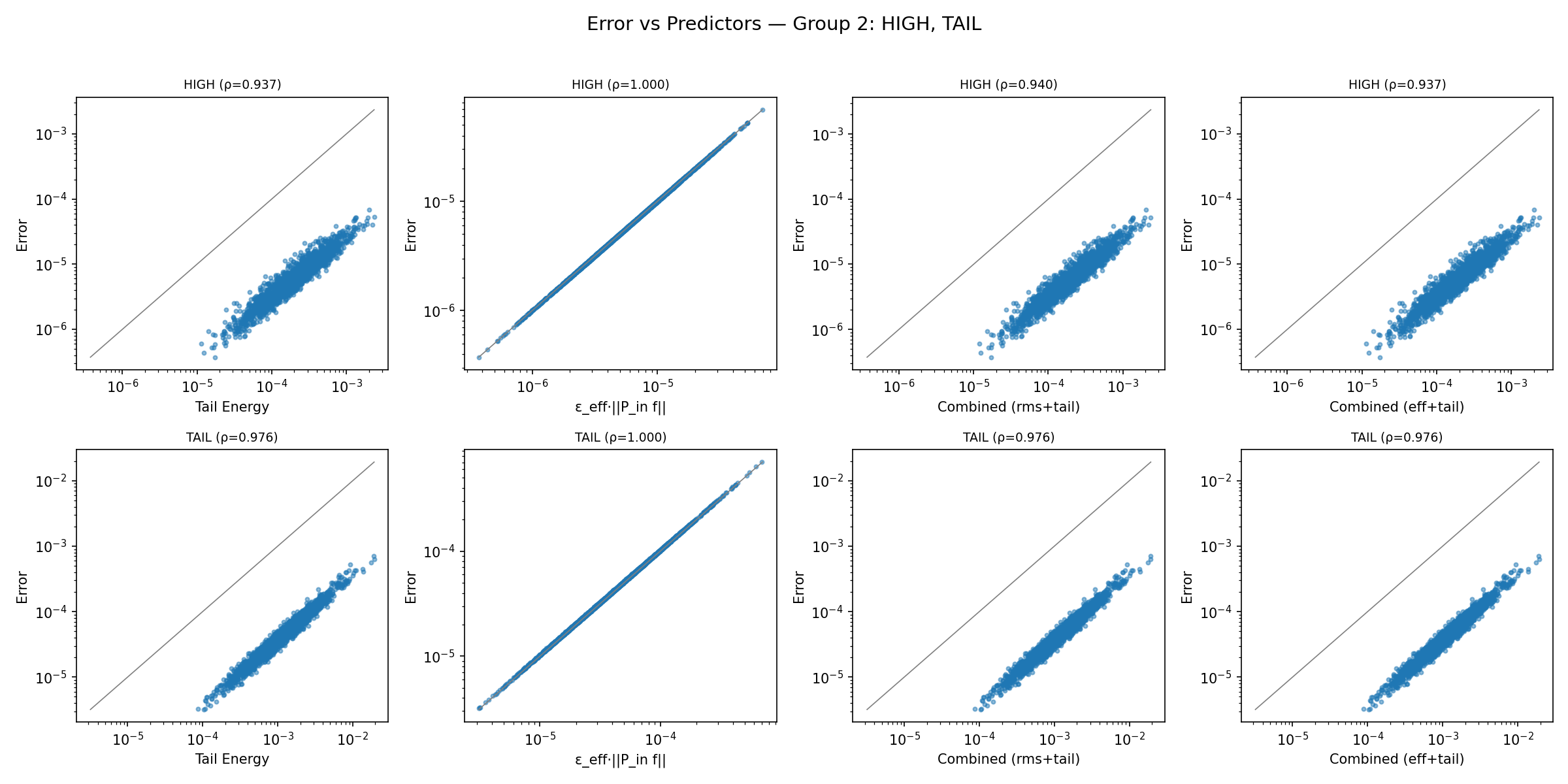}
    \label{fig:expD_HIGH}
\end{subfigure}
\caption{Summary of Experiment D (part 1).}
\label{fig:expD_part1}
\end{figure*}

\begin{figure*}[!t]
\centering
\begin{subfigure}[t]{\linewidth}
    \centering
    \includegraphics[width=\linewidth]{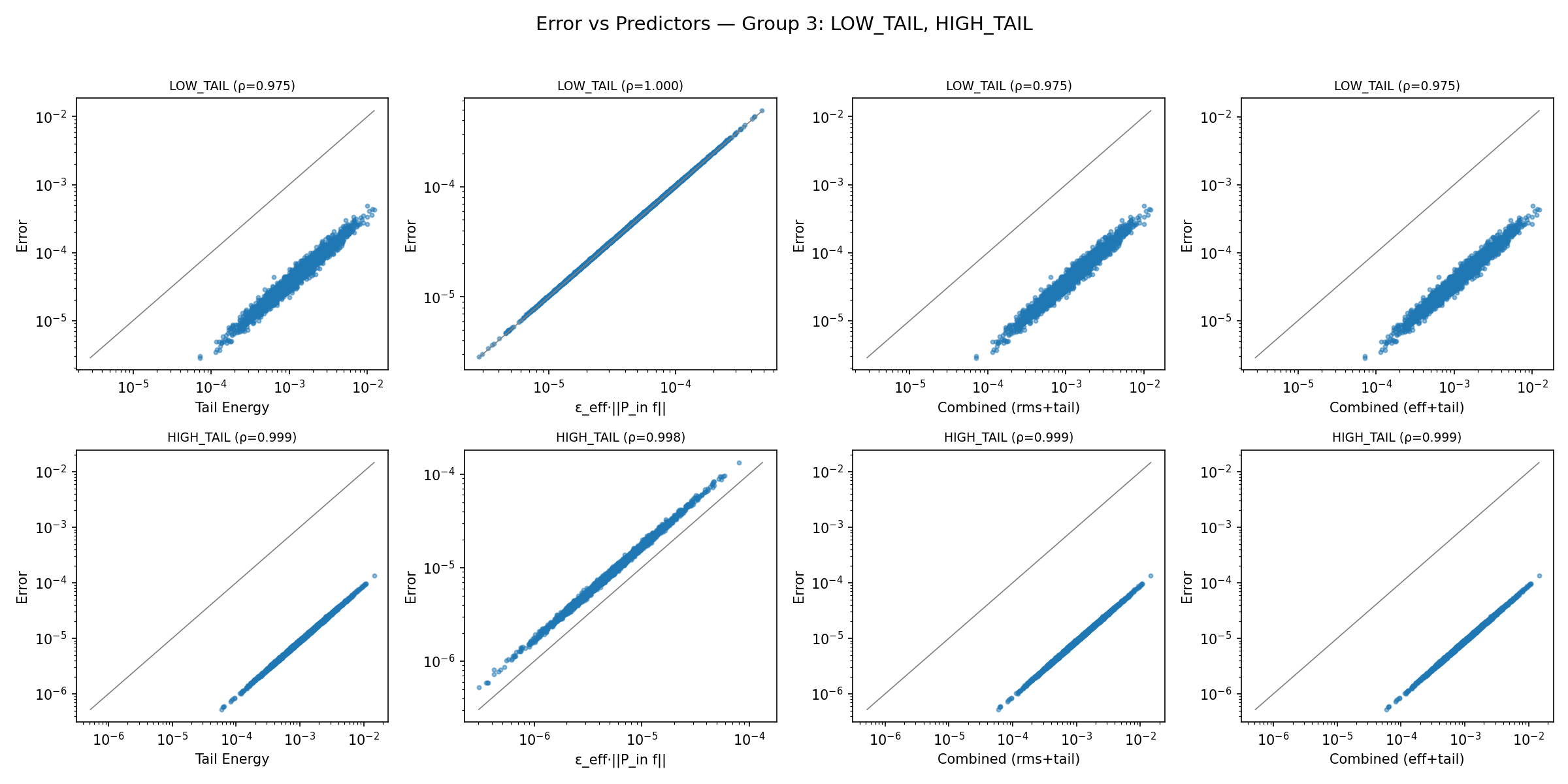}
    \label{fig:expD_TAIL}
\end{subfigure}
\vspace{1em}
\begin{subfigure}[t]{\linewidth}
    \centering
    \includegraphics[width=\linewidth]{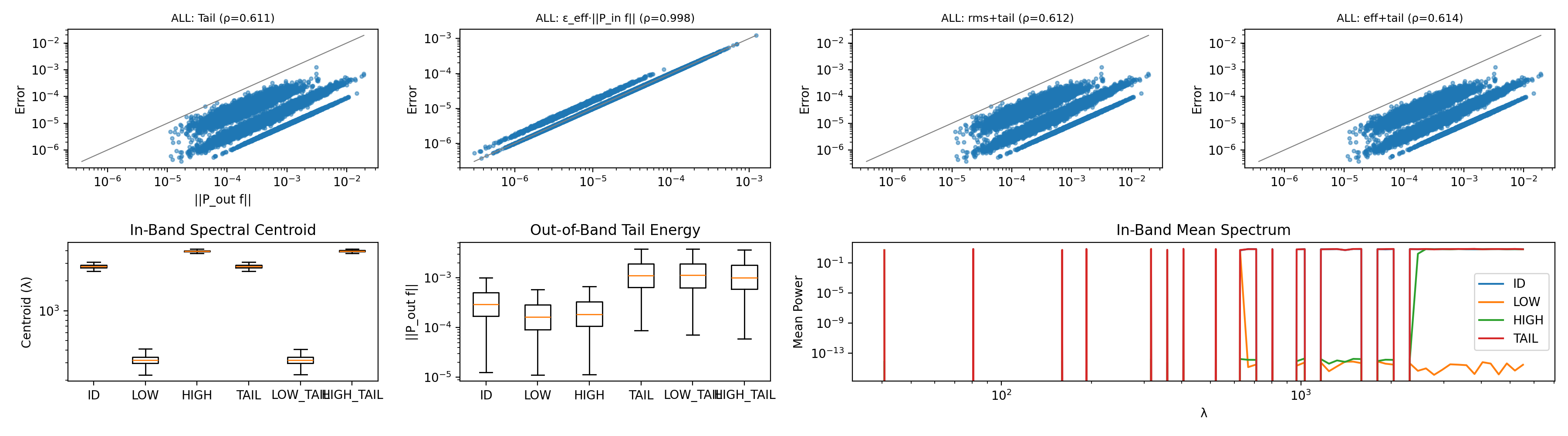}
    \label{fig:expD_LOW_TAIL}
\end{subfigure}
\vspace{1em}
\begin{subfigure}[t]{\linewidth}
    \centering
    \includegraphics[width=\linewidth]{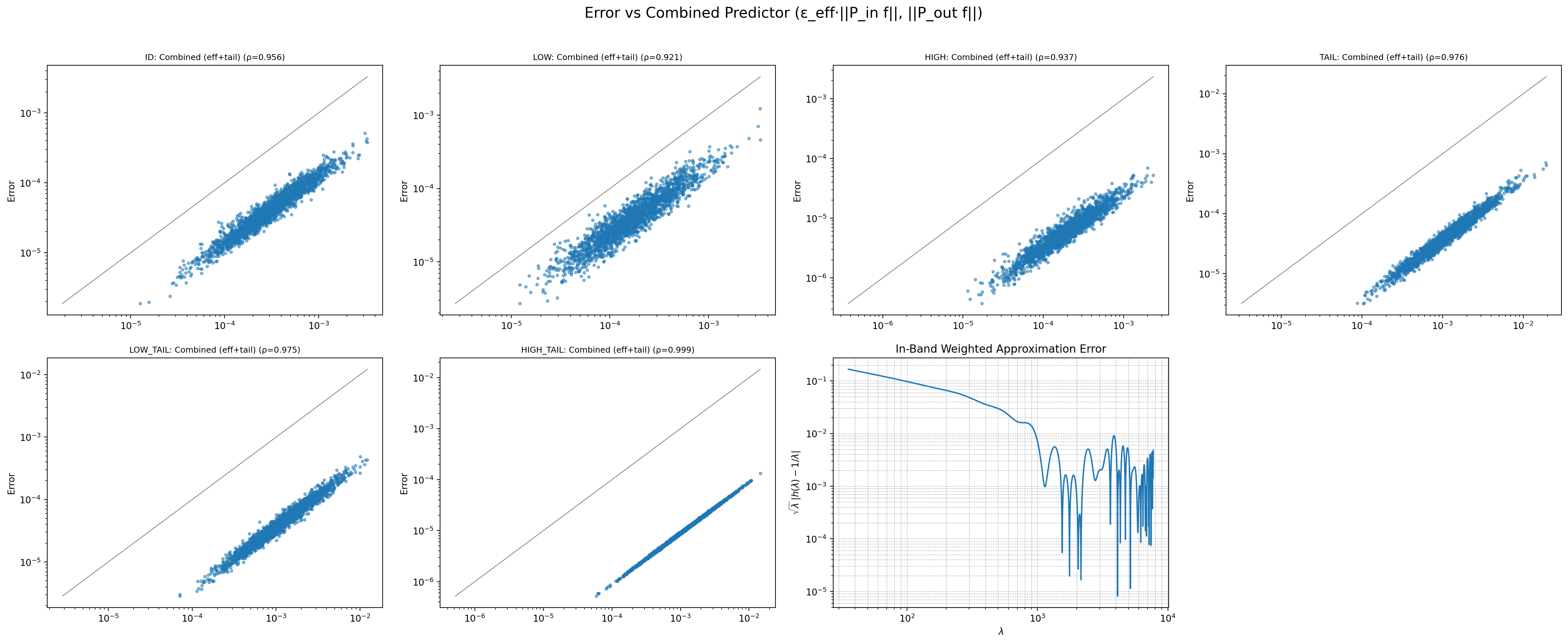}
    \label{fig:expD_HIGH_TAIL}
\end{subfigure}
\caption{Summary of Experiment D (part 2).}
\label{fig:expD_part2}
\end{figure*}

\subsection{Summary}
Together, Experiments A to D provide a coherent observable validation of the theory:
ID (Exp A): energy error is accurately predicted by in-band $\vareps$-weighted forcing magnitude.
OOD via tails (Exp B): error is dominated by out-of-band tail energy; combined predictors match the decomposition.
In-band shifts (Exp C): distribution-dependent $\vareps_{\text{eff}}$ explains error variations caused purely by reweighting spectral mass inside the band.
General shifts (Exp D): both effects coexist and remain separable in the observable terms; centroid-like proxies are inadequate.

These experiments collectively demonstrate that the decomposition is not merely a worst-case bound: its components are empirically measurable, interpretable, and predictive across ID, in-band shift, tail-induced OOD, and their combinations.

\section{Conclusion}
This paper proposes a spectral, structure-preserving approach to operator learning for elliptic PDEs by approximating the Green's operator as a Laplacian spectral filter $\Ghat_\theta = h_\theta(\Delta)$, implemented via Chebyshev expansions and trained with a weak-form objective aligned with the PDE's energy pairing. The resulting model is interpretable through the scalar spectral function $h_\theta(\lambda)$.
Empirically, we confirm that the theoretical energy-norm error decomposition is observable: across four experiments, the error $\|\nabla(\uhat - u)\|_{L^2}$ aligns with the learned spectral mismatch weighted by the input's spectral distribution. The effective diagnostic $\varepsilon_{\mathrm{eff}}(f) \|f\|_{L^2}$ reliably predicts error under both in-distribution and out-of-distribution shifts, while $\varepsilon_{\mathrm{rms}}$ can be overly optimistic and $\varepsilon_{\mathrm{sup}}$ remains conservative but safe. Mechanism plots further show that error variation is driven by in-band spectral concentration when tail energy is negligible.
These findings support a practical auditing workflow: (i) measure spectral mismatch of $h_\theta$, (ii) assess how test inputs weight this mismatch, and (iii) predict OOD performance, shifting generalization assessment from black-box testing to operator-structure diagnostics.
Limitations suggest clear extensions: adapting the framework to finite element discretizations and boundary conditions; handling variable geometries or meshes; exploring alternative filter parameterizations (e.g., rational or monotone); and generalizing to Hodge Laplacians on differential forms for connections to topology.
In sum, combining (a) a spectral functional-calculus hypothesis class, (b) weak-form training, and (c) energy-norm diagnostics yields a principled, interpretable framework for understanding and predicting generalization of learned elliptic operators under distribution shift.

\clearpage
\balance

\bibliographystyle{IEEEtran}

\bibliography{Reference}

\end{document}